\pdfoutput=1

\documentclass[]{fairmeta}

\usepackage{graphicx} 
\usepackage{natbib}  
\usepackage{caption} 
\usepackage{algorithm}
\usepackage{algorithmic}
\usepackage{amsmath}
\usepackage{amssymb}
\usepackage{amsthm}
\usepackage{booktabs}
\usepackage[table]{xcolor}

\usepackage{pgfplots}
\pgfplotsset{compat=1.16}
\usepgfplotslibrary{fillbetween}

\usepackage{listings}\usepackage{xcolor}
\theoremstyle{plain}

\newcommand{\K}{\mathcal{K}}
\newcommand{\Env}{\mathcal{E}}
\newcommand{\Lib}{\mathcal{L}}
\newcommand{\Map}{\mathcal{M}}
\newcommand{\cm}{\rho}
\newcommand{\pol}{\pi}
\newcommand{\seq}[2]{#1 \vdash #2}
\newcommand{\concl}{\mathrm{concl}}

\PassOptionsToPackage{capitalise}{cleveref}
\usepackage[most]{tcolorbox} 
\usepackage{array}
\usepackage{listings}
\usepackage{needspace}
\lstdefinestyle{promptbody}{%
  basicstyle=\ttfamily\footnotesize,%
  breaklines=true,%
  breakatwhitespace=true,%
  columns=fullflexible,%
  keepspaces=true,%
}
\lstdefinelanguage{Lean}{%
  morekeywords={theorem,lemma,def,structure,by,intro,obtain,have,haveI,exact,calc,%
    rw,simp,induction,with,constructor,by_contra,push_neg,omega,ring,ring_nf,%
    linarith,norm_num,field_simp,decide,use,open,import,set_option,Type,Prop,fun,True},%
  sensitive=true, morecomment=[n]{/-}{-/}, morecomment=[l]{--}, morestring=[b]",%
}
\lstdefinestyle{lean}{language=Lean, basicstyle=\ttfamily\footnotesize,%
  backgroundcolor=\color{black!3}, xleftmargin=3pt, framexleftmargin=3pt,%
  keywordstyle=\color{blue!60!black}, commentstyle=\color{green!45!black},%
  columns=fullflexible, keepspaces=true, breaklines=true, showstringspaces=false,%
  literate=%
   {∀}{{$\forall$}}1 {∃}{{$\exists$}}1 {∈}{{$\in$}}1 {∉}{{$\notin$}}1
   {∧}{{$\wedge$}}1 {∨}{{$\vee$}}1 {¬}{{$\neg$}}1 {→}{{$\to$}}1 {↦}{{$\mapsto$}}1
   {↪}{{$\hookrightarrow$}}1 {↑}{{$\uparrow$}}1 {≠}{{$\neq$}}1 {≥}{{$\geq$}}1
   {≤}{{$\leq$}}1 {⟨}{{$\langle$}}1 {⟩}{{$\rangle$}}1 {ℕ}{{$\mathbb{N}$}}1
   {ℝ}{{$\mathbb{R}$}}1 {ℤ}{{$\mathbb{Z}$}}1 {≡}{{$\equiv$}}1 {∑}{{$\sum$}}1
   {∖}{{$\setminus$}}1 {σ}{{$\sigma$}}1 {α}{{$\alpha$}}1 {·}{{$\cdot$}}1 {∘}{{$\circ$}}1  {←}{{$\leftarrow$}}1 {⊢}{{$\vdash$}}1 {⁻¹}{{$^{-1}$}}2
}
\usepackage[noabbrev,nameinlink]{cleveref} 
\usepackage{thmtools}
\usepackage{thm-restate}

\newtcblisting[auto counter]{PromptBox}[2][]{%
  breakable,enhanced,listing only,listing engine=listings,
  listing options={style=promptbody},
  colback=gray!6,colframe=gray!45,coltitle=white,colbacktitle=black!66,
  fonttitle=\bfseries\small,boxrule=0.5pt,arc=1.5pt,
  left=5pt,right=5pt,top=5pt,bottom=5pt,
  before skip=6pt,after skip=6pt,
  title={Prompt~\thetcbcounter: #2},
  title after break={Prompt~\thetcbcounter: #2 (continued)},
  #1,
}
\crefname{tcb@cnt@PromptBox}{prompt}{prompts}
\Crefname{tcb@cnt@PromptBox}{Prompt}{Prompts}
\newtcolorbox[auto counter]{ExampleBox}[2][]{%
  enhanced,breakable,colback=white,colframe=black!55,coltitle=white,colbacktitle=black!66,
  fonttitle=\bfseries\small,fontupper=\small,boxrule=0.5pt,arc=1.5pt,
  left=5pt,right=5pt,top=4pt,bottom=4pt,before skip=7pt,after skip=7pt,
  title={Example~\thetcbcounter.\ #2},
  title after break={Example~\thetcbcounter.\ #2 (continued)},
  pad at break*=3pt,#1}
\newtcblisting[auto counter]{LeanBox}[2][]{%
  breakable,enhanced,listing only,listing engine=listings,
  listing options={style=lean},
  colback=white,colframe=black!45,coltitle=white,colbacktitle=black!66,
  fonttitle=\bfseries\small,boxrule=0.5pt,arc=1.5pt,
  left=5pt,right=5pt,top=3pt,bottom=3pt,
  before skip=7pt,after skip=4pt,
  title={Listing~\thetcbcounter:\ #2},
  title after break={Listing~\thetcbcounter:\ #2 (continued)},
  #1}
\crefname{tcb@cnt@LeanBox}{listing}{listings}
\Crefname{tcb@cnt@LeanBox}{Listing}{Listings}
\crefname{tcb@cnt@ExampleBox}{example}{examples}
\Crefname{tcb@cnt@ExampleBox}{Example}{Examples}
\newcommand{\ExampleNeedspace}{\Needspace{0.34\textheight}}

\title{ProofEvolve: Neuro-Symbolic Evolution for Formal Automated Theorem Proving}

\author[1,2,*]{Wenqian Ye}
\author[2,*]{Ziwei Guan}
\author[1]{Eric Xie}
\author[1]{Bohan Liu}
\author[2]{Shivani Modi}
\author[2]{Buyun Zhang}
\author[2]{Ellie Dingqiao Wen}
\author[1]{Henry Kautz}
\author[1]{Aidong Zhang}

\affiliation[1]{University of Virginia}
\affiliation[2]{Meta AI}

\contribution[*]{Core contributors}

\abstract{Automated theorem proving offers a natural foundation for recursive self-improvement in scientific discovery. However, existing neural provers do not fully preserve this recursive structure, where the learning process should be self-improving over time.  Existing methods either embed proof experience into model parameters through expensive weight updates, or keep verified intermediate deductions only within the current problem.  In addition, these methods also heavily rely on sparse whole-proof feedback, even when unsuccessful partial attempts contain useful discoveries. To close the gap, we propose \textbf{ProofEvolve}, a \textit{neuro-symbolic} framework that evolves explicit, formally verified \textit{symbolic} proof structures with \textit{neural} models to decisively expand the knowledge boundary. In this framework, the neural model proposes variation operators, including decompositions, repairs, and schema recombinations. The symbolic Lean kernel verifies every proof transition. Over the evolution loops, ProofEvolve computes verified closure over the resulting proof directed acyclic graphs (DAGs). Within each problem, ProofEvolve evolves partial AND-OR proof DAGs in a behaviorally indexed archive. Across problems, kernel-checked schema extraction adds newly proved sub-DAGs to a persistent schema library. Proof DAGs inherit the solved results through typed schema recombination, with every residual premise exposed as a new subgoal. This evolutionary process preserves verified results from incomplete attempts and makes them available for later proofs without weakening formal soundness. Across three competition-level Lean benchmarks, ProofEvolve achieves the highest average solve rate among the evaluated proof systems.
}

\date{August 10, 2026}
\correspondence{Wenqian Ye and Aidong Zhang at
\href{mailto:wenqian@virginia.edu,aidong@virginia.edu}{\texttt{\{wenqian, aidong\}@virginia.edu}}}

\begin{document}

\maketitle

\section{Introduction}

Theorem proving paves the way for scientific reasoning to discover new knowledge with formal guarantee of correctness. A proof often contributes more than its stated conclusion. It can produce reusable lemmas, reveal hidden structures, and expose new hypotheses. Even unsuccessful proof programs can push forward important advances. For centuries, mathematicians attempted to derive \textit{Euclid's parallel
postulate} from his remaining axioms. Examining geometries in which the
postulate does not hold instead led to the development of non-Euclidean
geometry \citep{bonola1955noneuclidean}. A similar pattern is in
theoretical computer science (TCS). Hilbert's \textit{Entscheidungsproblem} asked
whether a general procedure could determine the validity of any
statement in first-order logic \citep{hilbert1928grundzuege}. Following the unsolved problem, Church
and Turing proved that no such general procedure exists
\citep{church1936unsolvable,turing1937computable}. Turing's analysis
also introduced a formal model of computation that later became the foundation of TCS. Therefore, proofs fundamentally convert individual results into reusable knowledge that supports later discoveries.

\begin{figure}[!ht]
\centering
\includegraphics[width=\textwidth]{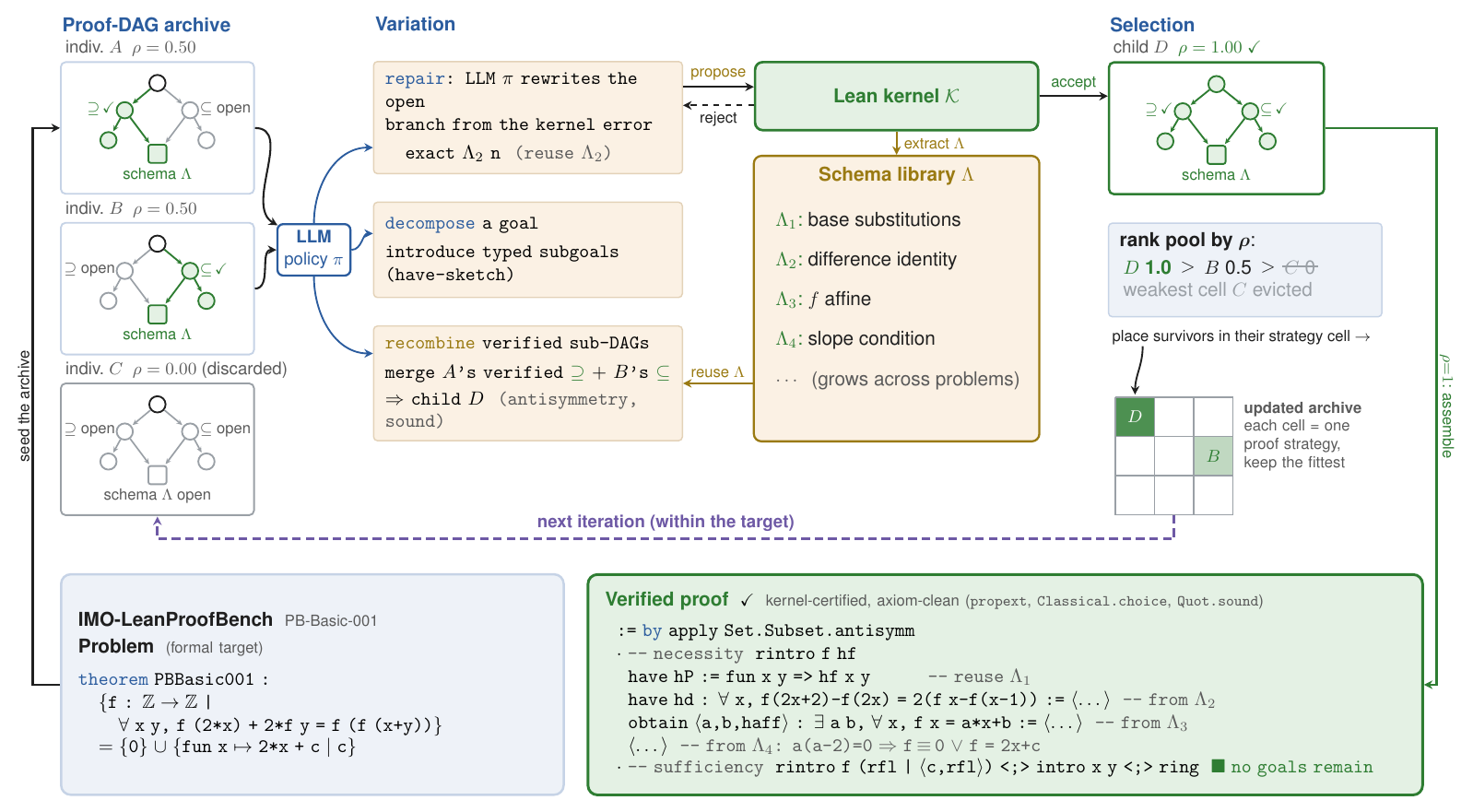}
\caption{\textbf{Overview of ProofEvolve.} Within a target, a proof DAG archive preserves structurally diverse candidates and ranks them by verified closure $\rho$, computed from kernel-accepted subgoals. Across targets, checked extraction adds newly proved results to a persistent schema library, and typed recombination instantiates them in later DAGs while exposing all residual premises as subgoals.}
\vskip -10pt
\label{fig:overview}
\end{figure}

Recently, AI systems have begun to automate parts of scientific discovery. Trained neural models with automatic evaluation have produced new algorithms, mathematical constructions, and structural patterns
\citep{romeraparedes2024funsearch,novikov2025alphaevolve,davies2021advancing}.
Evolutionary search is also promising as it can improve candidates over time through repeated generation, evaluation, and selection. However, current linguistic-based neural theorem provers still heavily use a small part of the verified structure. Training-based systems store experience mainly in model parameters, so new results affect later problems only after another training cycle
\citep{alphaproof2025nature,ren2025deepseekproverv2,route2025goedelproverv2}.
Agentic systems leverage multi-agent collaboration to work on problems with subgoals during inference
\citep{dsp,hilbert}.
One latest work, LEAP \citep{leap}, also shares intermediate lemmas across branches of a proof DAG, but this memory remains tied to the current target. Thus, current systems still focus mainly on whether the root theorem is solved.

Formal theorem proving in Lean~4 \citep{demoura2021lean4} provides a reliable setting for cumulative evolution because every accepted proof step is rigorously verified. However, current neural theorem provers do not fully preserve verified progress. Training based methods store experience mainly in model parameters \citep{alphaproof2025nature,ren2025deepseekproverv2,route2025goedelproverv2}, while agentic methods reuse information mainly within the current problem \citep{dsp,hilbert,leap}. As a result, useful structures from partial and completed proofs are rarely inherited across searches. This is a central limitation for evolutionary search, which requires useful structures to survive unsuccessful candidates and pass to later generations. A small textual mutation may invalidate a complete proof even when much of its verified argument remains correct \citep{nagashima2019evoisabelle}. A self-improving prover should therefore evolve verified partial structures rather than complete proof texts alone and preserve them for recombination across problems.

To address these shortcomings, we propose \textbf{ProofEvolve}, a neuro-symbolic evolutionary framework in which improvement grows in an explicit body of verified symbolic structures with neural proposals. As shown in Figure~\ref{fig:overview}, ProofEvolve grows a proof DAG for each theorem through kernel-verified transitions. At each step the language model proposes a decomposition, repair, or schema recombination, and Lean~4 kernel accepts or rejects it. A behaviorally indexed archive \citep{mouret2015mapelites} keeps structurally diverse partial proofs and ranks them by \emph{verified closure}, a kernel-grounded score that aggregates proved subgoals through the AND-OR structure, so selection acts on graded progress rather than a single pass-or-fail verdict. Across problems, ProofEvolve extracts closed sub-DAGs as theorem schemas. Typed recombination later instantiates a schema at a matching goal, exposes any remaining premises as new subgoals, and re-checks the result with the kernel. Verified results from earlier problems therefore enlarge the reachable search space of later ones.

To show the effectiveness, we evaluate our framework on three competition-level benchmarks, PutnamBench, IMO-LeanProofBench, and CombiBench, with a frontier LLM (Claude Opus 4.8) as the base model under matched per-target budgets, and on a disjoint Lean~Workbook split with the frontier open-weight model Qwen3.5-397B-A17B-FP8. ProofEvolve achieves an average solve rate of $57.8\%$, compared with $50.5\%$ for LEAP and $45.9\%$ for Hilbert. Ablation and test-time scaling studies further analyze the method's components and its behavior as the per-target budget grows. A separate evaluation isolates the library itself: on $744$ Lean~Workbook theorems disjoint from the $9{,}968$-theorem source stream, $5{,}546$ of the prover's own kernel-checked proofs improve the solve rate by about four points over zero-shot, whereas random retrieval from the same library gives no improvement.

In summary, our main contributions are as follows:

\begin{list}{$\bullet$}{\setlength{\leftmargin}{1.4em}\setlength{\labelwidth}{1.0em}\setlength{\labelsep}{0.4em}\setlength{\itemsep}{3pt}\setlength{\topsep}{3pt}\setlength{\parsep}{0pt}}
\item We formulate theorem proving as an evolution process over structured symbolic knowledge with persistent inheritance through a Lean-verified theorem schema library, where the neural models propose new directions to extend the knowledge boundary.
\item We introduce verified closure as a kernel-grounded fitness measurement and typed schema recombination as a mechanism to enable verified subproofs across targets.
\item Extensive experiments on ProofEvolve against state-of-the-art neural models and agentic baselines on three competition-level Lean benchmarks show the significant effectiveness of the proposed framework, and on $744$ Lean~Workbook theorems disjoint from its library the prover's own verified proofs add about four points over zero-shot, while random retrieval from the same library adds nothing.
\end{list}

\section{Related Work}

\subsection{Neural theorem provers}

Neural theorem provers train an LLM policy to propose tactics or complete proofs in a formal language. GPT-f \citep{polu2020gptf} and PACT \citep{han2021pact} established language-model policies for tactic generation. Other works \citep{yang2023leandojo,lample2022hypertree,xin2025bfsprover,xin2025bfsproverv2} combine learned proposals with retrieval or tree search. Recent systems obtain stronger policies from synthetic proofs, supervised fine-tuning, and reinforcement learning
\citep{ren2025deepseekproverv2,route2025goedelproverv2,kiminaprover2025,seedprover2025,ji2025leanabellv2}. AlphaProof \citep{alphaproof2025nature} trains at scale on kernel-checked self-generated experience and performs test-time adaptation on difficult targets. Retrieval-augmented provers reuse existing library lemmas at inference time \citep{shen2025realprover}, while self-play systems learn from both successful and failed proof trees \citep{poesia2024minimo}. ProofEvolve instead keeps the model fixed and stores newly proved results as explicit, verified schemas.

\subsection{Agentic theorem provers}

Agentic systems use multiple language models to structure proof search through decomposition, retrieval, and verifier feedback. Draft-Sketch-Prove \citep{dsp} turns informal arguments into formal sketches. COPRA \citep{copra} constructs proofs through repeated tactic execution. Hilbert \citep{hilbert} recursively decomposes difficult goals and repairs failed proofs, while LEAP \citep{leap} uses an AND-OR proof DAG to share intermediate lemmas across branches. AlphaProof Nexus \citep{apn2026} applies compiler-guided agents to research problems, while AlphaGeometry \citep{trinh2024alphageometry} combines neural proposals with symbolic deduction in geometry. These methods strengthen search within a target. In contrast, ProofEvolve additionally extracts verified sub-DAGs for reuse across targets, which enables the framework the ability to recursively evolve.

\subsection{Symbolic knowledge evolution}

Evolutionary program search, including FunSearch \citep{romeraparedes2024funsearch} and AlphaEvolve \citep{novikov2025alphaevolve}, alternates language model generation with automatic evaluation and selection, while MAP-Elites preserves diverse high quality candidates across behavioral niches \citep{mouret2015mapelites}. LEGO-Prover expands a lemma library \citep{wang2023legoprover}, although storage alone does not guarantee reuse \citep{legolearnfail2025}. DreamProver is closest to our setting because it keeps the model fixed and builds a transferable Lean library through wake sleep abstraction \citep{zhang2026dreamprover,ellis2021dreamcoder}. Other methods retrain a retriever \citep{kumarappan2024leanagent}, distill proof strategies \citep{fang2025strat2rocq}, or modify agent code using empirical fitness \citep{zhang2025darwingodel}. Formal proof evolution is difficult because verification is binary and small edits can invalidate useful candidates \citep{nagashima2019evoisabelle}. ProofEvolve instead combines verified closure and typed schema recombination in a single inference time process. Partial proof DAGs are selected through kernel accepted subgoals, while newly closed sub-DAGs become schemas reused in later candidates without modifying the model. Under Kautz's taxonomy on Neural Symbolic AI \citep{kautz2022thirdaisummer}, ProofEvolve is a \textsc{Neuro[Symbolic]} system where the Lean kernel and verified closure operator are embedded in the neural generation process to recursively improve over time.

\section{Preliminaries}
\label{sec:prelim}

\noindent
We first define the symbolic structures on which ProofEvolve operates. Every state is a Lean~4 artifact, and the search retains each partial result rather than discarding it. A closed fragment of an attempt is a valid, reusable result even before the attempt closes its root goal (Eq.~\eqref{eq:assembly}). This property is what later lets the search accumulate and transfer verified work (Section~\ref{sec:method}).

\paragraph{Lean verification.}
We work in a fixed Lean~4 environment $\Env$ that imports a matched Mathlib version \citep{mathlib2020}. We write $\Env;\Gamma\vdash_{\K}p:g$ when the term $p$ elaborates under local context $\Gamma$. It contains no unresolved metavariables or placeholders, and is accepted at type $g$ by Lean's kernel $\K$. A tactic state is $s=(\seq{\Gamma}{g})$ \citep{polu2020gptf,yang2023leandojo}, and its checked witnesses form
\begin{equation}
\mathsf{Prf}_{\Env}(s)
=\{p\mid \Env;\Gamma\vdash_{\K}p:g\}.
\label{eq:proof-set}
\end{equation}
We write $u\equiv_{\Env}v$ for definitional equality in $\Env$, let $\mathsf{Thm}(\Env)$ denote the theorem declarations imported into $\Env$, and call a target $T$ \emph{well-formed} when $\Env;\varnothing\vdash_{\K}T:\mathsf{Prop}$.

\paragraph{Reusable proof structures.}
For a \textit{well-formed} target $T$, a proof attempt is a finite acyclic AND-OR proof DAG $D=(V,E,r)$ with root $r=(\seq{\Gamma_0}{T})$, where each node is a tactic state. An accepted hyperedge $e=(s;s_1,\ldots,s_k)$ is one Lean-elaborated proof constructor from the child obligations to the source obligation, i.e., a checked realizer
\begin{equation}
F_e:\prod_{i=1}^{k}\mathsf{Prf}_{\Env}(s_i)
\longrightarrow \mathsf{Prf}_{\Env}(s).
\label{eq:realizer}
\end{equation}
For $k=0$, the empty product is the singleton $\{\star\}$ and $F_e(\star)\in\mathsf{Prf}_{\Env}(s)$ is a checked closing witness. Multiple edges leaving $s$ are alternative proof steps, whereas the children of a single edge must all be discharged. For DAGs sharing a root, we write $D\preceq D'$ when $V(D)\subseteq V(D')$, $E(D)\subseteq E(D')$, and all existing node labels and edge realizers are preserved, and we call $D'$ an \emph{extension} of $D$.

Let $\mathrm{out}_D(s)$ be the accepted edges leaving $s$ and $\mathrm{ch}(e)$ be their children. Since $D$ is acyclic, closure is well defined by
\begin{equation}
\mathrm{Closed}_D(s)
\Longleftrightarrow
\exists e\in\mathrm{out}_D(s)\;
\forall s'\in\mathrm{ch}(e),\ \mathrm{Closed}_D(s').
\label{eq:closed}
\end{equation}
The universal condition is vacuous for a checked closing edge. For each closed $s$, we fix one witnessing edge $\mathrm{win}_D(s)$, and the constructions below hold for any such choice. The open search boundary is
\begin{equation}
\mathrm{frontier}(D)=
\{s\in V:\neg\mathrm{Closed}_D(s),\
\mathrm{out}_D(s)=\varnothing\}.
\label{eq:frontier}
\end{equation}
Writing $\mathrm{win}_D(s)=(s;s_1,\ldots,s_k)$ and taking the empty tuple to be $\star$, proof assembly is the recursion
\begin{equation}
\mathrm{Asm}_D(s)=
F_{\mathrm{win}_D(s)}
\left((\mathrm{Asm}_D(s_i))_{i=1}^{k}\right).
\label{eq:assembly}
\end{equation}
Acyclicity makes this recursion well founded, and Eq.~\eqref{eq:realizer} gives $\mathrm{Asm}_D(s)\in\mathsf{Prf}_{\Env}(s)$: a closed internal sub-DAG is a typed result, reusable while its root remains open.

\section{Neuro-Symbolic Evolution for Formal Automated Theorem Proving}
\label{sec:method}

\subsection{Overview}
\label{sec:state}

\noindent
Figure~\ref{fig:overview} shows the ProofEvolve pipeline. A neural language model
proposes structural variations, the Lean kernel checks every proof transition and
schema application, and the search keeps the checked structures that make the most
verified progress. \emph{Verified closure} (Sec.~\ref{sec:progress}) turns the
kernel's binary verdict into a graded fitness read off the proof DAG, so two
unfinished attempts can still be ranked. \emph{Schema recombination}
(Sec.~\ref{sec:recombination}) carries a verified result from one proof into another
as a single kernel-checked step. Within one target, a \emph{proof DAG archive}
stores candidate proofs; across targets, a \emph{schema library} stores proved
results. Schema extraction adds results to the library, and schema recombination
applies them in later DAGs.

ProofEvolve separates neural proposal from symbolic state transition. The policy
$\pol$ proposes edits and the semantic retriever supplies premises and schemas, but
every change to the proof state passes through the symbolic operators defined below.
At evolutionary iteration $t$, let $\mathcal{Q}_t$ be the target queue, $\Map_{T,t}$
the proof DAG archive for target $T$, $\Lib_t$ the verified theorem schema library,
and $\mathcal{H}_t$ the store of rejected proposals and Lean errors keyed by
$(T,D,s)$. The operational state is
\begin{equation}
\label{eq:state}
\mathcal{S}_t=
\left(\mathcal{Q}_t,\{\Map_{T,t}\}_{T\in\mathcal{Q}_t},
\Lib_t,\mathcal{H}_t\right).
\end{equation}
Each $\Map_{T,t}$ is local to one target, whereas $\Lib_t$ persists across targets.
The trusted projection is $\Sigma_t=(\{\Map_{T,t}\}_T,\Lib_t)$, and only
kernel-verified DAG transitions, archive updates with accepted DAGs, and
kernel-checked schema insertions may modify it.

\subsection{Neural Proof Proposal}
\label{sec:operators}

\noindent
At each step the policy $\pol$ proposes an edit $\delta$ to one frontier state
$s\in\mathrm{frontier}(D)$, and the kernel decides whether it survives. The trusted
transition operator is
\begin{equation}
\mathrm{step}_{\K}(D,\delta)=
\begin{cases}
D', & \substack{\text{if Lean accepts the induced edge and}\\
                 \text{$D'$ is an acyclic extension of $D$}},\\
\bot, & \text{otherwise}.
\end{cases}
\label{eq:step}
\end{equation}
An accepted $D'$ extends $D$ by a typed edge carrying the realizer of
Eq.~\eqref{eq:realizer} and stays finite and acyclic, whereas a rejected proposal
leaves the proof DAG archive and schema library unchanged. A proposal takes one of
three forms. In \emph{decomposition}, the model proposes either a checked closing
term or a proof constructor with typed intermediate obligations, and any
proposal-time hole must become an explicit child state before acceptance
\citep{dsp,hilbert}. In \emph{schema recombination}, the model applies a library
schema, whose mechanics we defer to Section~\ref{sec:recombination}. In
\emph{repair}, available only for a previously rejected proposal at the same state,
the model receives the proposal, retrieval context, and Lean error, and the
corrected edge must pass the same $\mathrm{step}_{\K}$ check \citep{hilbert}.

\paragraph{Kernel-grounded selection.}
\label{sec:progress}
Root verification is binary, but an accepted DAG records which internal obligations
are already proved, and we turn this structure into a kernel-grounded fitness
functional. For every nonempty child set, let $w_e(s')>0$ with
$\sum_{s'\in\mathrm{ch}(e)}w_e(s')=1$. Edge and state values are defined together by
well-founded recursion over the acyclic DAG, evaluated from leaves to root in
reverse topological order. For a non-closing edge,
\begin{equation}
\cm_D(e)=
\sum_{s'\in\mathrm{ch}(e)}w_e(s')\cm_D(s').
\label{eq:edge-closure}
\end{equation}
For a state $s$,
\begin{equation}
\label{eq:rho}
\cm_D(s)=
\begin{cases}
1, & \mathrm{Closed}_D(s),\\
0, & \mathrm{out}_D(s)=\varnothing,\\
\displaystyle \max_{e\in\mathrm{out}_D(s)} \cm_D(e),
& \text{otherwise},
\end{cases}
\end{equation}
We use uniform weights and write $\cm(D)=\cm_D(r)$. The maximum encodes alternative
edges at an \textsc{or} node, and the weighted sum encodes the conjunctive
obligations along an \textsc{and} edge. Reverse topological induction gives
$\cm(D)\in[0,1]$. If $r$ is closed, the first case of Eq.~\eqref{eq:rho} gives
$\cm(D)=1$. Conversely, if an open state has value one, some outgoing edge has
weighted average one, and positivity of the weights forces every child to have value
one. Induction then closes every child and hence the source, a contradiction.
Therefore
\begin{equation}
\cm(D)=1\Longleftrightarrow\mathrm{Closed}_D(r).
\label{eq:closure-complete}
\end{equation}
Finally, if $D\preceq D'$, every earlier alternative remains available and closed
nodes stay closed, so $\cm(D')\geq\cm(D)$: verified closure is monotone under
extension and computed entirely from kernel-accepted proof DAGs.

\paragraph{Structural diversity.}
To preserve distinct proof strategies, let $\mathcal{B}$ be a fixed descriptor space
and let $b(D)\in\mathcal{B}$ record binned depth, dominant tactic family, and the
region of the schema index used by $D$. The archive $\Map_T$ \citep{mouret2015mapelites} stores at most one DAG
per descriptor, and an accepted challenger $D'$ updates its cell by
\begin{equation}
\label{eq:map}
\Map_T[b(D')]\leftarrow
\begin{cases}
D', & b(D')\notin\mathrm{dom}(\Map_T),\\
D', & \cm(D')>\cm(\Map_T[b(D')]),\\
\Map_T[b(D')], & \text{otherwise},
\end{cases}
\end{equation}
so the incumbent wins ties. For a temperature $\tau>0$, parents are sampled from
occupied cells by
\begin{equation}
P(D\mid\Map_T)=
\frac{\exp(\cm(D)/\tau)}
{\sum_{D''\in\mathrm{range}(\Map_T)}
\exp(\cm(D'')/\tau)}.
\label{eq:parent}
\end{equation}
The archive therefore retains structural diversity while verified closure supplies
selection pressure.

\paragraph{Frontier scheduling.}
\label{sec:loop}
Algorithm~\ref{alg:proofevolve} (Appendix~\ref{app:alg}) gives one iteration of
ProofEvolve, with parent selection following Eq.~\eqref{eq:parent}. For
$s\in\mathrm{frontier}(D)$, let $\cm_D^{[s\mapsto1]}$ denote the recursion of
Eqs.~\eqref{eq:edge-closure}--\eqref{eq:rho} with the value at $s$ counterfactually
fixed to one. Frontier scheduling then uses
\begin{equation}
\Delta_D(s)=
\cm_D^{[s\mapsto1]}(r)-\cm_D(r),
\label{eq:frontier-gain}
\end{equation}
which prioritizes the frontier state whose closure would yield the largest
structural gain.

\subsection{Symbolic Knowledge Inheritance}
\label{sec:recombination}

\noindent
Recombination is the operator that carries verified work across targets. A closed
state may depend on variables and assumptions from its local context, and
kernel-checked schema extraction turns that local result into a reusable pair
$(\ell,\pi_\ell)$: the schema $\ell$ quantifies the free local variables
$\mathbf{x}$, makes each used hypothesis $A_i$ an explicit premise, and has
conclusion $C$, while $\pi_\ell$ is its proof term:
\begin{equation}
\ell:\forall\mathbf{x},\
A_1\rightarrow\cdots\rightarrow A_m\rightarrow C,
\qquad
\Env\vdash_{\K}\pi_\ell:\ell.
\label{eq:schema}
\end{equation}
For this prenex schema, $\concl(\ell)=C$. Writing $\mathrm{Cl}(D)=\{s\in
V(D):\mathrm{Closed}_D(s)\}$, the states newly closed by an extension $D\preceq D'$
are
\begin{equation}
\mathrm{NewClose}(D,D')=
\mathrm{Cl}(D')\setminus\mathrm{Cl}(D).
\label{eq:new-close}
\end{equation}
For each $s\in\mathrm{NewClose}(D,D')$, the extraction map follows
$\mathrm{win}_{D'}$ and abstracts, in dependency order, exactly the free local
constants and hypotheses occurring in $\mathrm{Asm}_{D'}(s)$. It returns no schema
if this generalization or the displayed kernel judgment fails. We write
$\mathrm{Extract}_{\K}(D,D')$ for the checked schemas obtained from this set.

At an open state $s=(\seq{\Gamma}{g})$, a schema is applicable when a typed
substitution makes its conclusion definitionally equal to the goal. A substitution
$\sigma$ maps the binder telescope $\mathbf{x}$ to Lean terms in $\Gamma$, and we
write $\mathrm{Adm}_{\Gamma}(\sigma)$ when every instantiated binder elaborates in
$\Gamma$ without unresolved metavariables. Then
\begin{equation}
\label{eq:libs}
\mathsf{App}_{\Lib}(s)
=\{(\ell,\sigma): \concl(\ell)\sigma\equiv_{\Env} g
\land\mathrm{Adm}_{\Gamma}(\sigma)\}.
\end{equation}
For $(\ell,\sigma)\in\mathsf{App}_{\Lib}(s)$, Lean first attempts local witnesses
$p_i\in\mathsf{Prf}_{\Env}(\seq{\Gamma}{A_i\sigma})$. Each premise without such a
witness becomes a child state $(\seq{\Gamma}{A_i\sigma})$, and the instantiated
schema application is elaborated as a complete hyperedge realizer before acceptance.
The semantic retriever returns a Mathlib premise shortlist and a library-schema
shortlist,
\begin{equation}
\mathcal{R}_{M}(s)\subseteq\mathsf{Thm}(\Env),
\qquad
\mathcal{R}_{\Lib}(s)
\subseteq\{\ell\mid(\ell,\pi_\ell)\in\Lib\},
\label{eq:schema-shortlist}
\end{equation}
and the candidates presented to the policy are
\begin{equation}
\mathcal{C}_{\Lib}(s)=
\{(\ell,\sigma)\in\mathsf{App}_{\Lib}(s):
\ell\in\mathcal{R}_{\Lib}(s)\}.
\label{eq:recomb-candidates}
\end{equation}
To recombine, the model selects $(\ell,\sigma)\in\mathcal{C}_{\Lib}(s)$, and Lean accepts the result after checking the instantiated schema, all local witnesses, and all residual child obligations as one realizer. Mathlib retrieval separately supplies premises to the proposal context. A persistent schema therefore enters a later proof only as a kernel-checked transformation of its state.

\subsection{Theoretical analysis}
\label{sec:soundness}

We formalize the soundness of ProofEvolve by showing that every proof object
accepted into the proof DAG archive or schema library is validated by the Lean
kernel. The result below states that this property is preserved as the search
extends proof DAGs and grows the library. Let $\mathfrak{D}_0$ contain the
initial DAGs. For $t>0$, $\mathfrak{D}_t$ also contains every challenger
accepted during transitions $0,\ldots,t-1$, including challengers later
discarded by archive comparison.


\begin{restatable}[Invariance of kernel-grounded state]
{theorem}{soundnessTheorem}
\label{thm:soundness}
Under the formal assumptions in Appendix~\ref{app:safety}, every finite execution
$\mathcal{S}_0\rightarrow\cdots\rightarrow\mathcal{S}_n$ of
Algorithm~\ref{alg:proofevolve} satisfies, for each $0\leq t\leq n$:
\begin{list}{}{\setlength{\leftmargin}{2.4em}\setlength{\labelwidth}{2.0em}\setlength{\labelsep}{0.4em}\setlength{\itemsep}{2pt}\setlength{\topsep}{3pt}\setlength{\parsep}{0pt}}
    \item[\textup{(I1)}] every accepted edge in every $D\in\mathfrak{D}_t$
    has a checked realizer of the form in Eq.~\eqref{eq:realizer};
    \item[\textup{(I2)}] every closed node $s$ in every $D\in\mathfrak{D}_t$
    satisfies $\mathrm{Asm}_D(s)\in\mathsf{Prf}_{\Env}(s)$; and
    \item[\textup{(I3)}] every $(\ell,\pi_\ell)\in\Lib_t$ satisfies
    $\Env\vdash_{\K}\pi_\ell:\ell$.
\end{list}
For every $0\leq t<n$, $\Lib_t\subseteq\Lib_{t+1}$. If transition $t$ rejects
its proposal, then $\Sigma_{t+1}=\Sigma_t$.
\end{restatable}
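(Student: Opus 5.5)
We argue by induction on $t$, proving (I1)--(I3) together with the two frame conditions. The assumptions in Appendix~\ref{app:safety} presumably say three things: the initial DAGs in $\mathfrak{D}_0$ carry only checked edges, $\Lib_0$ contains only kernel-checked pairs, and the only writers to the trusted projection $\Sigma_t$ are $\mathrm{step}_{\K}$, the archive update of Eq.~\eqref{eq:map}, and $\mathrm{Extract}_{\K}$. Under these assumptions the base case $t=0$ is immediate for (I1) and (I3). For (I2) we use the assembly argument after Eq.~\eqref{eq:assembly}.

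For the inductive step, fix a transition $\mathcal{S}_t\to\mathcal{S}_{t+1}$ and split on the proposal type: decomposition, repair, or schema recombination.

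\textbf{Rejected proposals.} If $\mathrm{step}_{\K}$ returns $\bot$, then by Eq.~\eqref{eq:step} no DAG is added to $\mathfrak{D}_{t+1}$ beyond $\mathfrak{D}_t$. The archive receives no challenger, so Eq.~\eqref{eq:map} is not invoked, and extraction runs only on accepted extensions. The only state that changes is $\mathcal{H}_t$, which lies outside $\Sigma$. Hence $\Sigma_{t+1}=\Sigma_t$, and all invariants carry over unchanged.

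\textbf{Accepted proposals.} Here $D'=\mathrm{step}_{\K}(D,\delta)$ with $D\preceq D'$. The old edges keep their realizers by the definition of $\preceq$. The single new edge is a Lean-accepted realizer, and this holds for recombination too: the instantiated schema, the local witnesses $p_i$, and the residual children are elaborated and checked as one hyperedge. That gives (I1) for $D'$, and every other member of $\mathfrak{D}_{t+1}$ is already in $\mathfrak{D}_t$. The archive update only moves references among accepted DAGs, and $\mathfrak{D}$ is defined cumulatively, so discarded challengers still satisfy the invariants.

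\textbf{Library.} $\Lib_{t+1}=\Lib_t\cup\mathrm{Extract}_{\K}(D,D')$. Extraction emits a pair only when the displayed judgment $\Env\vdash_{\K}\pi_\ell:\ell$ succeeds, so (I3) holds. Since extraction only adds and never deletes, we also get $\Lib_t\subseteq\Lib_{t+1}$.

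The substantive step is (I2), which must be derived from (I1) for an arbitrary DAG. Fix $D\in\mathfrak{D}_{t+1}$ and argue by well-founded induction on the closed nodes in reverse topological order, using acyclicity.
\begin{itemize}
\item Let $s$ be closed with $\mathrm{win}_D(s)=(s;s_1,\dots,s_k)$. By Eq.~\eqref{eq:closed}, every $s_i$ is closed and strictly lower in the order, so the induction hypothesis gives $\mathrm{Asm}_D(s_i)\in\mathsf{Prf}_{\Env}(s_i)$.
\item By (I1) the edge has a realizer $F_e$ typed as in Eq.~\eqref{eq:realizer}. Applying it yields $\mathrm{Asm}_D(s)\in\mathsf{Prf}_{\Env}(s)$.
\item When $k=0$, the claim is $F_e(\star)\in\mathsf{Prf}_{\Env}(s)$ directly.
\end{itemize}

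The main obstacle is making this rigorous. $\mathsf{Prf}_{\Env}$ is defined by a kernel judgment on terms, so we need that substituting checked child terms into the realizer's holes yields a term the kernel accepts at type $g$ under the parent's context $\Gamma$. Two cases need care. For recombination edges, the child contexts coincide with $\Gamma$, so this is just kernel substitution and application. For decomposition edges, a child context may extend $\Gamma$ with introduced hypotheses; there we treat $F_e$ as a checked term abstracted over the child goals. If Appendix~\ref{app:safety} does not already state this substitution property of the kernel, we would take it as an explicit assumption: closure of well-typed terms under substitution of well-typed arguments, i.e.\ subject reduction and weakening.
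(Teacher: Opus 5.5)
Your proposal is correct and follows the paper's proof essentially step for step. The paper uses an outer induction on $t$; in the rejected case $\Sigma$, $\Lib$, and $\mathfrak{D}$ stay fixed; in the accepted case, $D\preceq D'$ together with the checked new edge gives (I1), and $\mathrm{Extract}_{\K}$ gives (I3) and $\Lib_t\subseteq\Lib_{t+1}$. (I2) then follows by an inner well-founded induction over closed nodes, where the paper uses an explicit height $h_D(s)$ along witnessing edges in place of your reverse topological order. The kernel substitution property you flag as a possible extra assumption is not needed: Eq.~\eqref{eq:realizer} already types $F_e$ as a map from child witnesses into $\mathsf{Prf}_{\Env}(s)$, and the paper closes that step by simply applying it.
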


Building on Theorem~\ref{thm:soundness}, we obtain the validity of the proofs
ProofEvolve returns.
\begin{restatable}[Validity of returned proofs]
{corollary}{returnedProofCorollary}
\label{cor:returned-proof}
Under the assumptions of Theorem~\ref{thm:soundness}, if ProofEvolve returns
$p$ from $D'\in\mathfrak{D}_n$ for a target $T$ with root
$r=(\seq{\Gamma_0}{T})$, then
$\Env;\Gamma_0\vdash_{\K}p:T$.
\end{restatable}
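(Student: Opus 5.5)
The corollary is the last statement, so I would prove it as a direct consequence of Theorem~\ref{thm:soundness}, specifically invariant (I2). First I fix what ``returns $p$ from $D'$'' means operationally. Per Algorithm~\ref{alg:proofevolve} and the assumptions in Appendix~\ref{app:safety}, the return step fires only when the root $r$ of some accepted DAG $D'$ satisfies $\mathrm{Closed}_{D'}(r)$. Equivalently, by Eq.~\eqref{eq:closure-complete}, it fires when $\cm(D')=1$. The returned term is then $p=\mathrm{Asm}_{D'}(r)$ from Eq.~\eqref{eq:assembly}, built along the fixed witnessing edges $\mathrm{win}_{D'}$. I would state this as the reading of the return rule and note that, by the theorem, the argument does not depend on which witnessing edges were chosen.

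Next, since $D'\in\mathfrak{D}_n$ and the execution $\mathcal{S}_0\to\cdots\to\mathcal{S}_n$ is finite, I apply (I2) at $t=n$ to the closed node $r$. This gives $\mathrm{Asm}_{D'}(r)\in\mathsf{Prf}_{\Env}(r)$. Unfolding Eq.~\eqref{eq:proof-set} with $r=(\seq{\Gamma_0}{T})$ yields $\Env;\Gamma_0\vdash_{\K}p:T$.

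If one prefers not to treat (I2) as a black box, the same conclusion follows by well-founded induction in reverse topological order over $D'$, using (I1). Each winning edge $\mathrm{win}_{D'}(s)=(s;s_1,\ldots,s_k)$ carries a checked realizer $F_e$ of type Eq.~\eqref{eq:realizer}. The induction hypothesis places each $\mathrm{Asm}_{D'}(s_i)$ in $\mathsf{Prf}_{\Env}(s_i)$, so applying $F_e$ lands in $\mathsf{Prf}_{\Env}(s)$. The base case $k=0$ is the checked closing witness $F_e(\star)$.

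The only real obstacle is the interface step: making sure the returned $p$ is literally $\mathrm{Asm}_{D'}(r)$ for a $D'$ that lies in $\mathfrak{D}_n$. The worry is that the return might come from a DAG that was discarded from the archive, or that some textual post-processing alters the term. The definition of $\mathfrak{D}_t$ already covers discarded challengers. For post-processing, I would invoke the appendix assumption that the returned term is the assembled one, or one re-checked by the kernel against $T$ in $\Gamma_0$. Either way the kernel judgment transfers unchanged.
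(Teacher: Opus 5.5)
Your proposal is correct and follows the paper's proof. The return guard together with Eq.~\eqref{eq:closure-complete} gives $\mathrm{Closed}_{D'}(r)$, invariant (I2) then yields $p=\mathrm{Asm}_{D'}(r)\in\mathsf{Prf}_{\Env}(r)$, and unfolding Eq.~\eqref{eq:proof-set} gives $\Env;\Gamma_0\vdash_{\K}p:T$; your worry about post-processing does not arise, since Algorithm~\ref{alg:proofevolve} returns $\mathrm{Asm}_{D'}(r)$ literally and its guard also re-checks exactly this kernel judgment.
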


The neural models decide only which variations to attempt, whereas the kernel
decides whether an accepted one is valid. Every proof ProofEvolve returns therefore
type-checks against the standard axioms by construction. The assumptions and full
proofs of Theorem~\ref{thm:soundness} and Corollary~\ref{cor:returned-proof} are deferred to Appendix~\ref{app:safety}.

\newcommand{\micon}[1]{%
  \raisebox{-0.22em}{\includegraphics[height=1.05em]{Figures/#1}}\,%
}

\begin{table}[t]
\centering
\setlength{\tabcolsep}{4pt}
\vskip -10pt
\begin{tabular}{@{}lcccc@{}}
\toprule
Method & Putnam (\%) & IMO-Lean (\%) & Combi (\%) & Avg.\ (\%) \\
\midrule

\rowcolor{gray!10}
\multicolumn{5}{c}{\itshape Inference-only models} \\
\micon{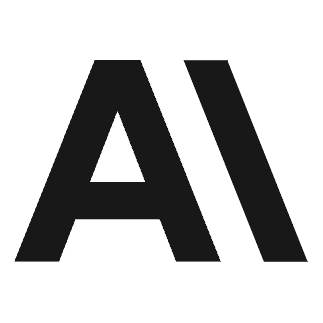}Claude Haiku 4.5
\citep{anthropic2025claudehaiku45}
& $0.0$ & $0.0$ & $3.3$ & $1.1$ \\
\micon{anthropic_ic.png}Claude Sonnet 4.6
\citep{anthropic2026claudesonnet46}
& $0.0$ & $0.0$ & $6.7$ & $2.2$ \\
\micon{anthropic_ic.png}Claude Opus 4.8
\citep{anthropic2026claudeopus48}
& $0.0$ & $0.0$ & $10.0$ & $3.3$ \\
\micon{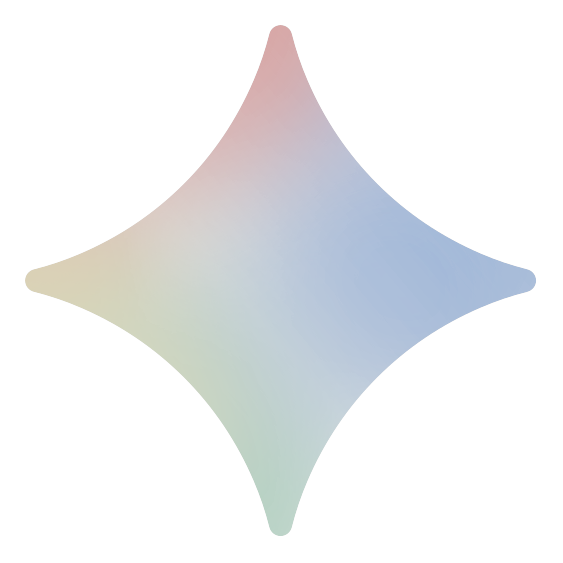}Gemini 3.1 Pro
\citep{googledeepmind2026gemini31pro}
& $0.0$ & $3.3$ & $10.0$ & $4.4$ \\
\micon{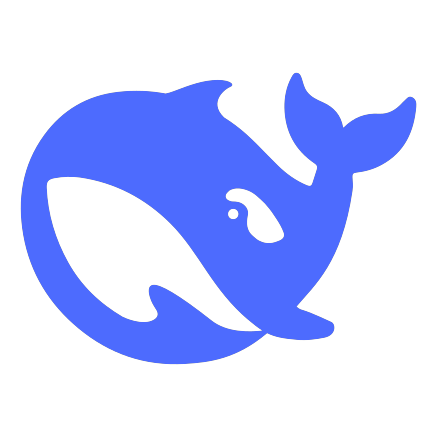}DeepSeek-Prover-V2-671B
\citep{ren2025deepseekproverv2}
& $7.0$ & $0.0$ & $10.0$ & $5.7$ \\
Goedel-Prover-V2-32B
\citep{route2025goedelproverv2}
& $12.8$ & $5.0$ & $0.0$ & $5.9$ \\
\micon{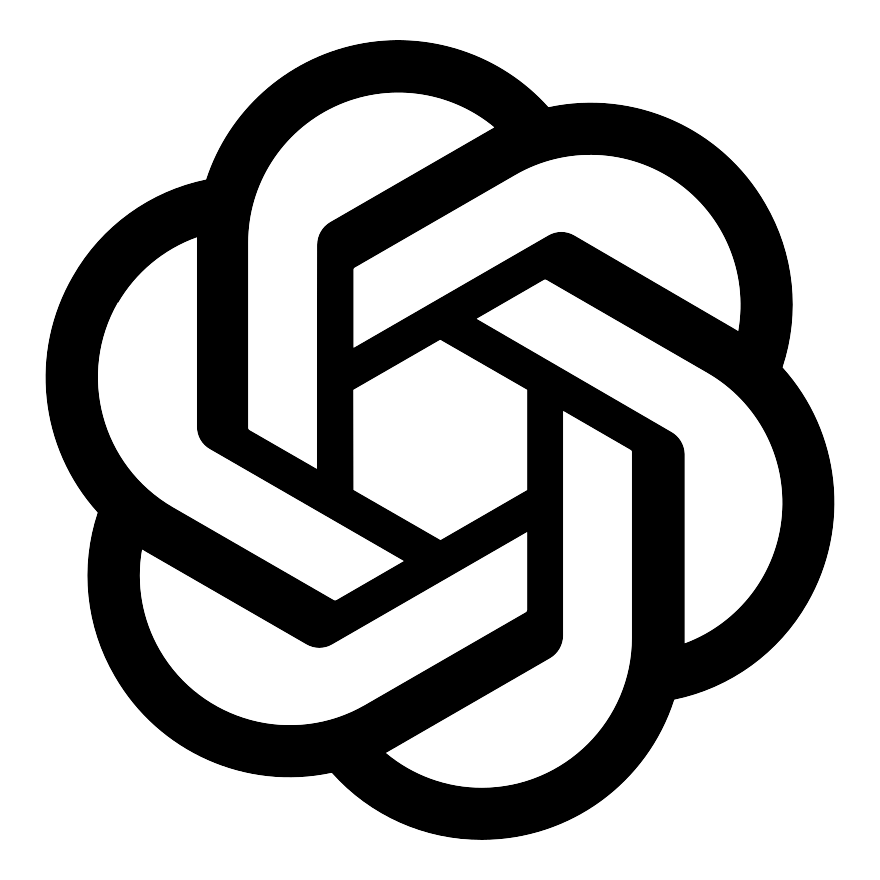}GPT-5.5
\citep{openai2026gpt55}
& $10.0$ & $5.0$ & $13.0$ & $9.3$ \\

\midrule
\rowcolor{gray!10}
\multicolumn{5}{c}{\itshape Agentic systems} \\
ReAct~\citep{yao2023react} (Claude Opus 4.8)
& $35.0$ & $15.0$ & $27.0$ & $25.7$ \\
Aristotle \citep{aristotle} (Claude Opus 4.8)
& $45.0$ & $13.3$ & $40.0$ & $32.8$ \\
AxProver \citep{axprover} (Claude Opus 4.8)
& $54.3$ & $10.0$ & $47.0$ & $37.1$ \\
Hilbert \citep{hilbert} (Claude Opus 4.8)
& $55.5$ & $33.3$ & $49.0$ & $45.9$ \\
LEAP \citep{leap} (Claude Opus 4.8)
& $64.7$ & $36.7$ & $\mathbf{50.0}$ & $50.5$ \\

\midrule
\rowcolor{gray!10}
\multicolumn{5}{c}{\itshape Our work} \\
\textbf{ProofEvolve} (Claude Opus 4.8)
& $\mathbf{71.2}$ & $\mathbf{53.3}$ & $49.0$ & $\mathbf{57.8}$ \\
\bottomrule
\end{tabular}

\caption{\textbf{Main comparison.} Mean solve rate (\%) over three independent runs on PutnamBench
\citep{tsoukalas2024putnambench}, IMO-LeanProofBench
\citep{luong2025imobench}, and CombiBench
\citep{liu2025combibench}. }
\vskip -10pt
\label{tab:main}
\end{table}

\section{Experiments}
\label{sec:exp}

\subsection{Experimental setup}

\paragraph{Implementation.}
In our experiments, the parameters of all base LLMs remain frozen throughout search and across targets. For all agentic baselines, we use Claude Opus 4.8 as base model to ensure fair comparison. Lean~4 with Mathlib provides tactic states, elaboration errors, and kernel verification. Each target receives a budget of parallel attempts together with a bounded kernel-guided repair loop, and ProofEvolve draws its attempts from this budget. All experiments use Lean~4 with the same Mathlib version. For proprietary models, we use API calls. For open-sourced model hosting, we use NVIDIA B200 GPU clusters with 192 GB of memory per GPU. Each node contains 8 GPUs, and our largest runs use up to 28 nodes, corresponding to 224 GPUs operating concurrently.

\paragraph{Benchmarks and baselines.}
We evaluate on three competition-level benchmarks. PutnamBench \citep{tsoukalas2024putnambench} formalizes problems from the William Lowell Putnam Mathematical Competition. We use its pure-proof subset, which excludes problems whose theorem statements already contain a fixed answer value. IMO-LeanProofBench \citep{luong2025imobench} contains Lean formalizations of International Mathematical Olympiad-level proof problems. CombiBench \citep{liu2025combibench} covers competition-level combinatorial mathematics, where a proof usually rests on an explicit construction or count. We compare against pass@$16$ sampling from the LLMs and five agentic systems: LEAP, Hilbert, AxProver, Aristotle, and ReAct. Our reproduction of each agentic system uses the same base LLM, retains its search strategy, and runs under a matched budget.

\paragraph{Evaluation metrics.}
Our primary metric is the solve rate, the fraction of a benchmark whose theorems are proved and pass the Lean~4 kernel verification. A theorem is evaluated as solved only when its final proof matches the benchmark ground truth, contains no unresolved metavariables or placeholders, and passes the Lean kernel. We exclude proofs that rely on \texttt{native\_decide}, because its code-generation path introduces an axiom outside the standard proof kernel. We apply the same verification to every solution in our evaluation. Every reported solve is independently re-verified against the matched Lean kernel with a restricted \texttt{\#print axioms} check. Across more than $400$ re-verifications, we found $0$ false positives.


\subsection{Main results}
\label{sec:main}

Table~\ref{tab:main} reports solve rates on the three benchmarks. Under pass@$16$ sampling, Claude Opus 4.8 without agentic search solves $0.0\%$ of PutnamBench and IMO-LeanProofBench. The agentic systems use the same base model Claude Opus 4.8, providing the closest matched comparison of their search methods. ProofEvolve has the highest average solve rate at $57.8\%$, ahead of LEAP ($50.5\%$) and Hilbert ($45.9\%$). It leads PutnamBench at $71.2\%$, $6.5$ points above LEAP, and widens the margin on IMO-LeanProofBench, reaching $53.3\%$ against $36.7\%$ for LEAP. On CombiBench the strongest systems are within one point, LEAP at $50.0\%$ and ProofEvolve at $49.0\%$. The largest margin appears on IMO-LeanProofBench, whose problems often require proofs assembled from several lemmas. This pattern is consistent with the intended role of graded selection and schema reuse on decomposable problems.

\subsection{Dynamics of verified closure \texorpdfstring{$\cm$}{rho}}
\label{sec:analysis}

We study how verified closure $\cm$ changes during proof search. ProofEvolve uses $\cm$ in Eq.~\eqref{eq:rho} as a fitness value computed from kernel-accepted edges. Unlike a binary root verdict, it records partial progress once Lean certifies intermediate subgoals. Since Eq.~\eqref{eq:closure-complete} guarantees that solved runs reach $\cm=1$, we focus on the search trajectory before completion. Figure~\ref{fig:rho} shows that $\cm$ increases step by step as subgoals are verified, while failed runs plateau below one. Figure~\ref{fig:dag} shows one solved run in which several lemmas are certified before the root is finally closed. These results show that $\cm$ captures verified intermediate progress that binary feedback cannot represent.

\begin{figure}[t]
\centering
\begin{subfigure}[b]{0.372\textwidth}
  \centering
  \includegraphics[width=\linewidth]{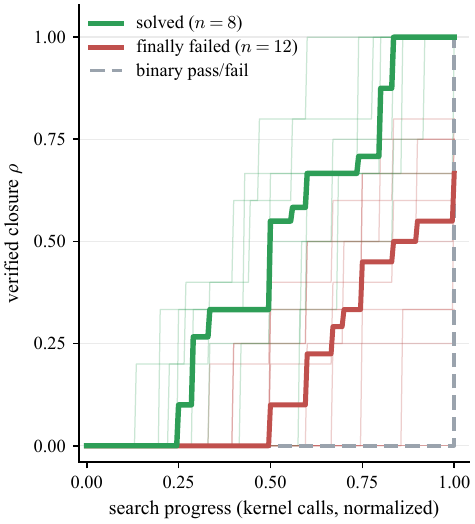}
  \caption{Closure trajectories}
  \label{fig:rho}
\end{subfigure}\hfill
\begin{subfigure}[b]{0.598\textwidth}
  \centering
  \includegraphics[width=\linewidth]{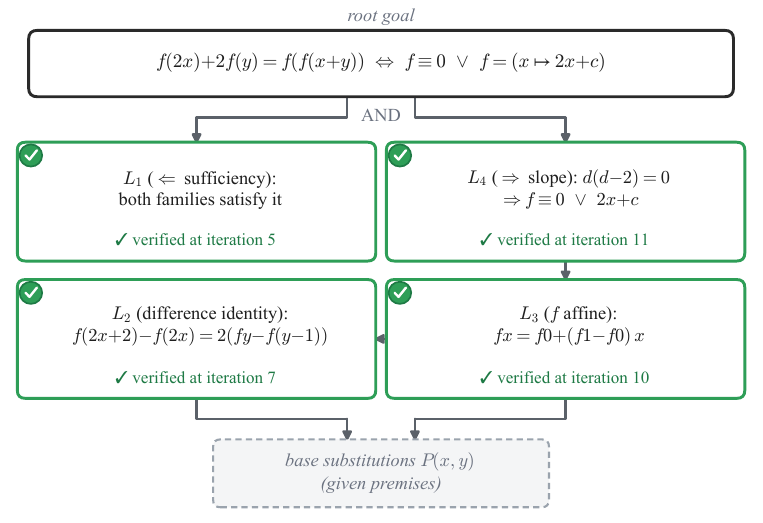}
  \caption{One solved proof DAG}
  \label{fig:dag}
\end{subfigure}
\vskip -2pt
\caption{\textbf{Verified closure $\cm$ during search.}
\textbf{(a)}~$\cm$ against evolutionary iterations: solved runs (green) reach $1$, failed
runs (red) plateau below, and the binary pass/fail signal (dashed) stays at $0$
(medians with interquartile bands).
\textbf{(b)}~An accepted proof DAG whose lemma nodes are certified by the kernel at
iterations $5,7,10,11$, so $\cm$ rises step by step to $1$.}
\vskip -10pt
\label{fig:closure}
\end{figure}



\begin{figure}[t]
\centering
\includegraphics[width=0.44\textwidth]{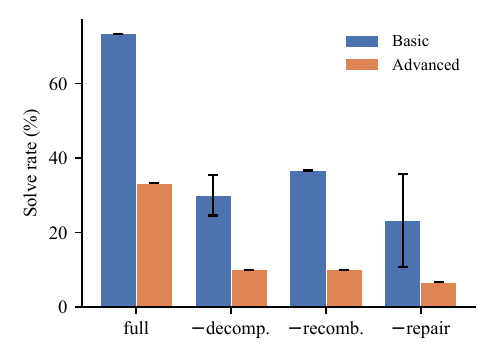}
\vskip -4pt
\captionsetup{width=0.68\textwidth}
\caption{\textbf{Per-difficulty ablation:} every variation operator contributes more on the harder Advanced split than on the Basic split. Error bars show the run-to-run standard deviation across independent reruns.}
\vskip -8pt
\label{fig:difficulty}
\end{figure}

\subsection{Ablation study}
\label{sec:ablation}

We conduct ablation study on how each of the three variation operators affects ProofEvolve. Decomposition breaks a goal into smaller subgoals. Repair fixes a failed step using the error message from Lean~4. Recombination reuses an already proved result to close a new goal. We remove one operator at a time on the $60$ IMO-LeanProofBench problems. The base model and the compute budget stay the same, and we repeat each run with five random seeds to ensure statistical stability. As shown in Figure~\ref{fig:difficulty}, the full system on average solves $32$ of the $60$ problems, with $22$ of $30$ on the Basic split and $10$ of $30$ on the Advanced split. Without decomposition it on average solves $11$ ($9$ Basic, $2$ Advanced), without recombination $14$ ($11$ Basic, $3$ Advanced), and without repair $9$ ($7$ Basic, $2$ Advanced). The drop is larger on the harder Advanced split than on the Basic split. These results show that all three variation operators contribute to the performance of ProofEvolve, with larger effects on the Advanced split.

\begin{figure}[t]
\centering
\includegraphics[width=\textwidth]{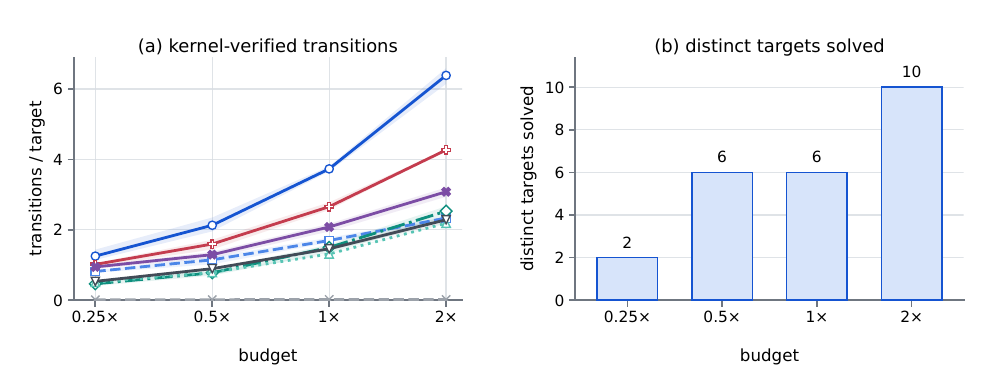}
\vskip -6pt
\caption{\textbf{Test-time budget scaling} (a) Kernel-verified transitions per target across seeds; (b) Union of distinct targets solved across seeds and configuration. \emph{In (a)}: \micon{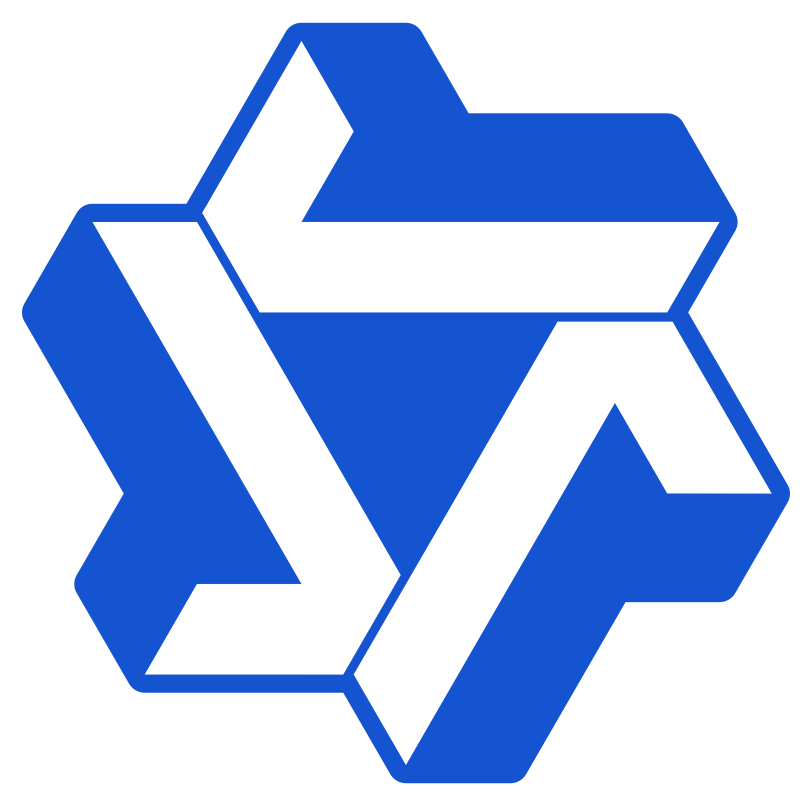}Qwen3.5 \textcolor[HTML]{1554D1}{think~$\circ$}/\textcolor[HTML]{4C86E8}{instant~$\square$}; \ \micon{qwen_mark.png}Qwen3.6 \textcolor[HTML]{0E8F7E}{instant~$\Diamond$}/\textcolor[HTML]{57C4B4}{think~$\triangle$}; \ \  \micon{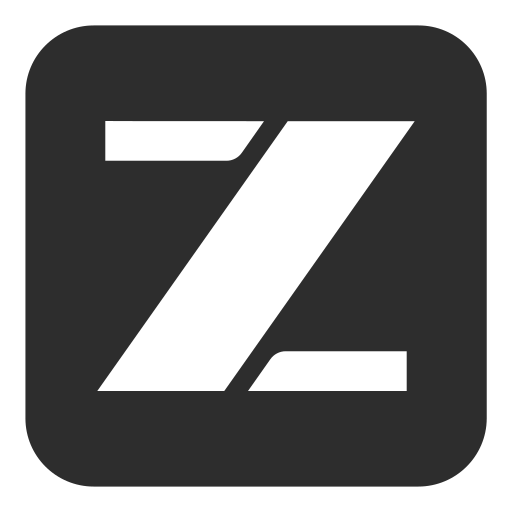}GLM-5.1 \textcolor[HTML]{C43C4E}{$+$};  \ \ \micon{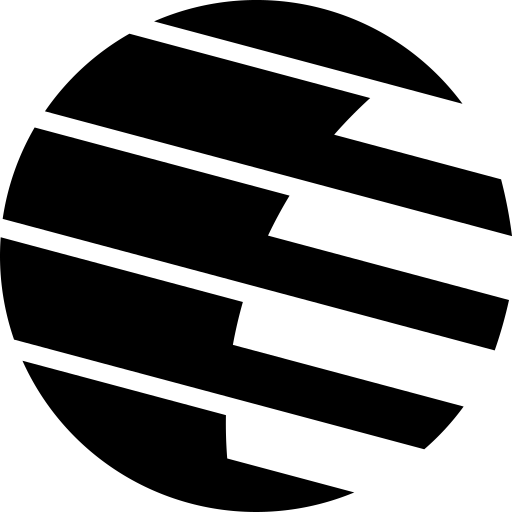}Kimi-K2.6 \textcolor[HTML]{7C4DA5}{$\times$}; \ {\micon{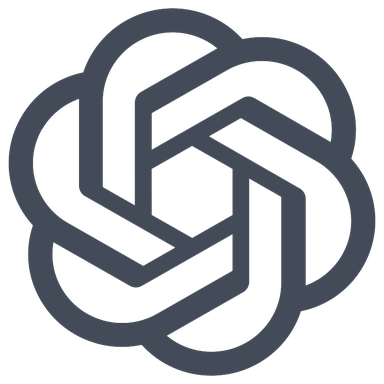}gpt-oss \textcolor[HTML]{424B57}{med~$\triangledown$}/\textcolor[HTML]{9CA2AA}{low~$\times$}}.}
  \vskip -10pt
\label{fig:openweight}
\end{figure}

\subsection{Test-time budget scaling}
\label{sec:openweight}
 We next test ProofEvolve with open-weight models as the per-target budget grows. The models are Qwen3.5-397B~\citep{qwen2026qwen35fp8card}, Qwen3.6-35B~\citep{qwen2026qwen36card}, Kimi-K2.6~\citep{moonshot2026kimik26card}, GLM-5.1~\citep{zai2026glm51fp8card}, and gpt-oss-120b~\citep{openai2026gptoss120bcard}, each run in the inference modes it supports, for eight configurations. A per-target budget caps the model calls, Lean-kernel calls, tokens, and wall-clock time each target may use. Appendix~\ref{app:openweight} gives the models, decoding, and the full budget table. Seven of the eight configurations produce more kernel-verified transitions per target as the budget grows (Figure~\ref{fig:openweight}a, per-seed values in Appendix~\ref{app:openweight-grid}). The one exception, gpt-oss-120b at low reasoning effort, is flat across budgets, confirming that the trend is not merely a by-product of issuing more calls. The final results  in Figure~\ref{fig:openweight}b shows that the number of distinct targets solved increases monotonically.  Appendix~\ref{app:openweight-grid} reports the exact per-budget counts. We also provide the solutions found by the models in Appendix~\ref{app:openweight-cases}.

\pgfplotsset{heldoutaxis/.style={
  width=\linewidth, height=4.2cm,
  ymin=-1, ymax=6.6, ytick={0,2,4,6},
  ylabel={Lift over zero-shot (pp)},
  ylabel near ticks, xlabel near ticks,
  label style={font=\footnotesize}, tick label style={font=\scriptsize},
  grid=major}}

\begin{figure}[t]
\centering
\begin{subfigure}[t]{0.485\textwidth}
\centering
\begin{tikzpicture}
\begin{axis}[heldoutaxis,
  xlabel={Library size (verified self-solutions)},
  xtick={0,2000,4000,5546}, xticklabels={$0$,$2$k,$4$k,full}]
\addplot[name path=upper,draw=none,forget plot] coordinates
  {(0,0)(1000,4.08)(2000,4.20)(4000,5.01)(5546,4.49)};
\addplot[name path=lower,draw=none,forget plot] coordinates
  {(0,0)(1000,2.56)(2000,1.08)(4000,3.33)(5546,3.31)};
\addplot[blue!15,forget plot] fill between[of=upper and lower];
\addplot[blue,mark=*,thick,mark size=1.6pt] coordinates
  {(0,0)(1000,3.32)(2000,2.64)(4000,4.17)(5546,3.90)};
\addplot[black,dashed,thick] coordinates {(0,0)(5546,0)};
\addplot[red,only marks,mark=triangle*,mark size=2.6pt] coordinates {(5546,0.13)};
\end{axis}
\end{tikzpicture}
\caption{Library size, at $K{=}8$}
\label{fig:heldout-size}
\end{subfigure}\hfill
\begin{subfigure}[t]{0.485\textwidth}
\centering
\begin{tikzpicture}
\begin{axis}[heldoutaxis,
  xlabel={Retrieved examples $K$}, xtick={0,8,16,32,64}]
\addplot[name path=upper,draw=none,forget plot] coordinates
  {(0,0)(8,4.49)(16,5.19)(32,5.03)(64,6.02)};
\addplot[name path=lower,draw=none,forget plot] coordinates
  {(0,0)(8,3.31)(16,3.95)(32,1.79)(64,4.64)};
\addplot[blue!15,forget plot] fill between[of=upper and lower];
\addplot[blue,mark=*,thick,mark size=1.6pt] coordinates
  {(0,0)(8,3.90)(16,4.57)(32,3.41)(64,5.33)};
\addplot[black,dashed,thick] coordinates {(0,0)(64,0)};
\end{axis}
\end{tikzpicture}
\caption{Retrieval depth, full library}
\label{fig:heldout-depth}
\end{subfigure}
\vskip -2pt
\caption{\textbf{Reusing the prover's own verified proofs.} Solve-rate lift over
zero-shot, in percentage points, on the $744$ screened evaluation theorems
(three-run mean). \textcolor{blue}{$\bullet$}~relevant retrieval;
\textcolor{black}{\textbf{--\,--}}~zero-shot;
\textcolor{red}{$\blacktriangle$}~random retrieval from the same library. The band is
$\pm1$ run-to-run standard deviation \emph{of the lift}, which is not the standard
deviation of the solve rate reported in Table~\ref{tab:heldout}. The lift is $0$ at
an empty library and at $K{=}0$, where both conditions reduce to zero-shot.}
\vskip -10pt
\label{fig:heldout}
\end{figure}

\subsection{Old Proofs, New Theorems: Verified Reuse Across Problems}
\label{sec:transfer}

A library that grows during evaluation should make later targets easier to prove. We
test this at two scales. The controlled study below isolates accumulation in a setting
where the dependency structure is known by construction.

\paragraph{Controlled compositional families.}
To isolate the effect of a library that grows during evaluation, we run a controlled study on synthetic compositional lemma families, where each later target is built from lemmas that earlier targets establish. ProofEvolve solves $19.8\%$ of the targets when the library grows across targets and $7.3\%$ when the library is reset before each target. The growing library therefore solves $2.7\times$ as many targets, with every other component held fixed. The study isolates the inheritance mechanism in this controlled setting, separate from the benchmark evaluation. Recombination stays sound throughout, because a typed schema discharges a subgoal only when Lean accepts its instantiation, so a mismatched retrieval fails without changing the trusted proof state.

\paragraph{Unseen Lean Workbook theorems.}
The compositional families are synthetic. We next evaluate cross-problem reuse on
real competition-style theorems, using a library built entirely from the prover's
own work. We start from Lean~Workbook
\citep{ying2024leanworkbook} statements with the machine-generated proofs released by
Goedel-Prover \citep{lin2025goedelproverv1}. Re-verification under the same kernel and
axiom checks retains $20{,}554$ statement and proof pairs, and deduplication leaves
$10{,}968$ distinct theorems. We hold out $1{,}000$ for evaluation and use the
remaining $9{,}968$ as a source stream. The base model, Qwen3.5-397B-A17B-FP8
\citep{qwen2026qwen35fp8card}, attempts every buildable source theorem once; its
$5{,}546$ kernel-accepted proofs form the library. Five independent language-model
judges screen the evaluation theorems against their retrieved neighbors, leaving the
$744$ theorems that fewer than two judges flag. For each evaluation theorem the
semantic retriever selects the top-$K$ schemas, and their statements and Lean-accepted
proof bodies enter the proposal context. \emph{Zero-shot} omits that context, and
\emph{random retrieval} supplies $K$ schemas drawn uniformly from the same library.
All conditions share one prompt template, one Lean environment and one decoding
profile. We report the mean over three runs, with condition contrasts computed as
paired differences across matched runs.

At $K{=}8$ the library raises the solve rate from $49.5\%$ to $53.4\%$, a gain of
$3.9$ points, while random retrieval from the same library reaches $49.6\%$: the
improvement comes from selecting useful verified work, not from adding examples to the
prompt. The gain appears at every library size and retrieval depth we measure. A
library of only $1{,}000$ proofs already adds $3.3$ points, and the gain reaches $5.3$
points at $K{=}64$. Figure~\ref{fig:heldout} shows both scaling axes and
Table~\ref{tab:heldout} lists every condition. Across the three runs at $K{=}8$,
relevant retrieval closes $351$ theorem instances that zero-shot leaves open, and in
$322$ of them, or $91.7\%$, the accepted proof does not reproduce any shown proof
verbatim. The library supplies reusable proof structure rather than a catalog of
answers.

This study isolates one mechanism. Each condition gives a single whole-proof attempt
with no repair and no second sample, so retrieved schemas act as in-context exemplars
rather than as typed instantiations composed into a realizer
(Eqs.~\eqref{eq:libs}--\eqref{eq:recomb-candidates}); the DAG archive, decomposition,
repair, and verified closure as a selection signal are all switched off. Holding the
search fixed is what makes the library's own contribution measurable.
Appendix~\ref{app:heldout} records the full setup and Appendix~\ref{app:examples}
shows two retrieval-to-proof traces.



\section{Conclusion}

We introduced ProofEvolve, a neuro-symbolic evolutionary framework that improves through explicit, formally verified proof structures. It represents partial proofs as AND-OR DAGs, ranks them using kernel-grounded verified closure, preserves structurally diverse candidates, and extracts closed sub-DAGs as reusable theorem schemas. The symbolic Lean~4 kernel faithfully checks every proposed variation before it can evolve the internal structured knowledge. Empirically, ProofEvolve achieves the highest average solve rate among the state-of-the-art baselines on three challenging competition-level Lean benchmarks, and on Lean~Workbook theorems disjoint from its library a library of the prover's own verified proofs adds about four points over zero-shot, with random retrieval from the same library adding nothing. More broadly, this work paves a concrete step toward recursively self-improving agents that accumulate formal knowledge over time. This insight could shed light on scientific discovery in other scientific domains, such as theoretical physics and chemistry. We hope this work can inspire future research on the field of continual learning for AI-driven scientific discovery.

\bibliographystyle{assets/plainnat}
\bibliography{paper}
\clearpage
\appendix
\section*{\LARGE Appendix}
\section{Algorithm}
\label{app:alg}

The full evolutionary iteration summarized in Section~\ref{sec:loop} is given below.

\begin{algorithm}[h]
\caption{One evolutionary iteration in ProofEvolve}
\label{alg:proofevolve}
\textbf{Input}: target queue $\mathcal{Q}$; policy $\pol$; kernel $\K$;
archives $\{\Map_T\}$; schema library $\Lib$; error store $\mathcal H$
\begin{algorithmic}[1]
\STATE Select $T\in\mathcal{Q}$. If $\Map_T$ is empty, insert the singleton root DAG.
\STATE Sample parent $D$ from $\Map_T$ using Eq.~\eqref{eq:parent}.
\STATE Select $s\in\mathrm{frontier}(D)$ maximizing $\Delta_D(s)$.
\STATE Retrieve Mathlib premises $\mathcal{R}_{M}(s)$ and schemas
$\mathcal{R}_{\Lib}(s)$; form $\mathcal{C}_{\Lib}(s)$.
\IF{$\mathcal H(T,D,s)$ contains a rejected edit and error $\epsilon$}
    \STATE Sample $\delta\sim\pol(\textsc{repair}\mid s,D,\mathcal{R}_{M}(s),
    \mathcal{C}_{\Lib}(s),\epsilon)$.
\ELSE
    \STATE Choose an applicable
    $o\in\{\textsc{decompose},\textsc{recombine}\}$.
    \STATE Sample $\delta\sim\pol(o\mid s,D,\mathcal{R}_{M}(s),
    \mathcal{C}_{\Lib}(s))$.
\ENDIF
\STATE Compute $D'\leftarrow\mathrm{step}_{\K}(D,\delta)$.
\IF{$D'=\bot$}
    \STATE Store the rejected edit and Lean error in $\mathcal H(T,D,s)$;
    \textbf{return} with $\Sigma$ unchanged.
\ENDIF
\STATE $\Lib\leftarrow\Lib\cup\mathrm{Extract}_{\K}(D,D')$.
\STATE Update $\Map_T[b(D')]$ using Eq.~\eqref{eq:map}.
\IF{$\cm(D')=1$ and
$\Env;\Gamma_0\vdash_{\K}\mathrm{Asm}_{D'}(r):T$}
    \STATE \textbf{return} $\mathrm{Asm}_{D'}(r)$.
\ENDIF
\end{algorithmic}
\end{algorithm}

\section{Proofs of Theoretical Results}
\label{app:safety}

\noindent
This appendix states the execution assumptions and derives the invariant and
returned-proof results used in Section~\ref{sec:soundness}.

\subsection{Assumptions}
Theorem~\ref{thm:soundness} uses the following assumptions.
\begin{list}{}{\setlength{\leftmargin}{2.4em}\setlength{\labelwidth}{2.0em}\setlength{\labelsep}{0.4em}\setlength{\itemsep}{2pt}\setlength{\topsep}{3pt}\setlength{\parsep}{0pt}}
    \item[\textup{(A1)}] The environment $\Env$ and kernel $\K$ are fixed throughout
    the execution. Every target that occurs in the execution is well formed in
    $\Env$. Every initial DAG is finite, acyclic, rooted at
    $(\seq{\Gamma_0}{T})$, and each accepted edge satisfies
    Eq.~\eqref{eq:realizer}.
    \item[\textup{(A2)}] Every initial library entry
    $(\ell,\pi_\ell)\in\Lib_0$ satisfies
    $\Env\vdash_{\K}\pi_\ell:\ell$.
    \item[\textup{(A3)}] Every later DAG is produced by
$\mathrm{step}_{\K}$ in Eq.~\eqref{eq:step}. Every later library
entry is produced by $\mathrm{Extract}_{\K}$, which returns a pair
$(\ell,\pi_\ell)$ only if Lean elaborates the resulting declaration
without unresolved metavariables and verifies
$\Env\vdash_{\K}\pi_\ell:\ell$. If abstraction, elaboration, or
kernel verification fails, no schema is returned. Archive and
library updates follow Algorithm~\ref{alg:proofevolve}.
\end{list}

\subsection{Kernel-grounded invariance}
Let $\mathfrak{D}_0$ contain the initial DAGs; for $t>0$, $\mathfrak{D}_t$ also contains every challenger accepted during transitions $0,\ldots,t-1$, including challengers later discarded by archive comparison.
\soundnessTheorem*
\begin{proof}
We use induction on the transition index $t$. Assumption (A1) establishes (I1)
at $t=0$, and (A2) establishes (I3).

To derive (I2), fix $D\in\mathfrak{D}_0$ and a closed node $s$. Write
$\mathrm{win}_D(s)=(s;s_1,\ldots,s_k)$ and define
\begin{equation}
h_D(s)=
\begin{cases}
0, & k=0,\\
1+\displaystyle\max_{1\leq i\leq k}h_D(s_i), & k>0.
\end{cases}
\label{eq:closure-height}
\end{equation}
Acyclicity makes $h_D(s)$ finite. If $h_D(s)=0$, then (I1) gives
\begin{equation}
\mathrm{Asm}_D(s)
=F_{\mathrm{win}_D(s)}(\star)
\in\mathsf{Prf}_{\Env}(s).
\label{eq:assembly-base}
\end{equation}
For $h_D(s)>0$, every $s_i$ has smaller height. The inner induction gives
$p_i=\mathrm{Asm}_D(s_i)\in\mathsf{Prf}_{\Env}(s_i)$, so
\begin{equation}
\mathrm{Asm}_D(s)
=F_{\mathrm{win}_D(s)}(p_1,\ldots,p_k)
\in\mathsf{Prf}_{\Env}(s).
\label{eq:assembly-step}
\end{equation}
This proves (I2) at $t=0$.

Assume (I1)--(I3) at index $t$. If
$\mathrm{step}_{\K}(D,\delta)=\bot$, Algorithm~\ref{alg:proofevolve} updates
only the error store. Therefore
\begin{equation}
\Sigma_{t+1}=\Sigma_t,
\qquad
\Lib_{t+1}=\Lib_t,
\qquad
\mathfrak{D}_{t+1}=\mathfrak{D}_t.
\label{eq:reject-invariant}
\end{equation}
All three invariants follow immediately.

Suppose instead that $\mathrm{step}_{\K}(D,\delta)=D'$. By
Eq.~\eqref{eq:step}, $D\preceq D'$ and the new edge has a checked realizer.
All old edges retain their realizers, so (I1) holds for
\begin{equation}
\mathfrak{D}_{t+1}=\mathfrak{D}_t\cup\{D'\}.
\label{eq:accepted-dags}
\end{equation}
Apply the height induction in Eqs.~\eqref{eq:closure-height}--\eqref{eq:assembly-step}
to every closed node of $D'$. Its witnessing edges are either old edges, covered
by the outer induction hypothesis, or the new edge, covered by
Eq.~\eqref{eq:step}. Hence (I2) holds for $D'$ and remains true for every DAG in
$\mathfrak{D}_t$.

The library update is
\begin{equation}
\Lib_{t+1}
=\Lib_t\cup\mathrm{Extract}_{\K}(D,D')
\supseteq\Lib_t.
\label{eq:library-monotonicity}
\end{equation}
Every extracted pair satisfies Eq.~\eqref{eq:schema}, so (I3) is preserved.
Equation~\eqref{eq:map} stores either $D'$ or the previous valid incumbent.
This completes the outer induction.
\end{proof}

\subsection{Validity of returned proofs}
\returnedProofCorollary*
\begin{proof}
The return guard in Algorithm~\ref{alg:proofevolve}, the equivalence in
Eq.~\eqref{eq:closure-complete}, and Theorem~\ref{thm:soundness}(I2) give
\begin{equation}
\cm(D')=1
\Longrightarrow \mathrm{Closed}_{D'}(r)
\Longrightarrow
\mathrm{Asm}_{D'}(r)\in\mathsf{Prf}_{\Env}(r).
\label{eq:return-derivation}
\end{equation}
The algorithm returns $p=\mathrm{Asm}_{D'}(r)$. By
Eq.~\eqref{eq:proof-set},
\begin{equation}
p\in\mathsf{Prf}_{\Env}(\seq{\Gamma_0}{T})
\Longrightarrow
\Env;\Gamma_0\vdash_{\K}p:T.
\end{equation}
\end{proof}

\section{Setup of Open-weight Models}
\label{app:openweight}
This appendix gives the setup for the test-time compute scaling study in Section~\ref{sec:openweight}. The study uses the ProofEvolve mechanism from Section~\ref{sec:method} and varies the base model and composite per-target budget. Each reported proof elaborates in the frozen Lean~4 environment with the matched Mathlib commit. It contains no unresolved metavariables or placeholders, does not use \texttt{native\_decide}, and passes an independent \texttt{\#print axioms} check. We use seeds $\{19,36,65\}$.

\paragraph{Target set.}
The scheduled evaluation manifest contains $485$ targets from the benchmarks in Table~\ref{tab:main}: PutnamBench ($326$), IMO-LeanProofBench ($60$), and CombiBench ($99$). Each target is scheduled with all three seeds.

\paragraph{Models and serving.}
All open-weight serving ran on the computation nodes with $8\times$ NVIDIA B200 GPUs ($192$\,GB HBM each), dual-socket Intel Xeon
hosts ($224$ vCPUs, about $3.9$\,TB RAM), running Ubuntu~22.04.5 LTS
(kernel \texttt{6.8.0-1040-gcp}) with CUDA~12.8. We serve under Python~3.11.15 with two engines: {vLLM 0.19.0} (PyTorch 2.10.0+cu128) for the Qwen models and
{SGLang 0.5.10} (PyTorch 2.9.1+cu128) for GLM-5.1, Kimi-K2.6, and gpt-oss-120b. Per-model tensor parallelism is listed in Table~\ref{tab:openweight-serving}. Proof
verification uses {Lean~4.29.1} with {Mathlib} commit \texttt{5e932f97}
and {pantograph~0.3.15}. Proprietary baselines (Claude Opus~4.8) are accessed
through the vendor API. The headline results in Table~\ref{tab:main} therefore use API inference plus local Lean kernel verification and do not consume the B200 cluster.

\paragraph{Decoding.}
Table~\ref{tab:openweight-serving} gives the decoding parameters, which remain fixed across budget scales. The per-call output cap is $32{,}768$ tokens for every run.

\begin{table}[t]
\centering\small
\setlength{\tabcolsep}{4.2pt}
\begin{tabular}{@{}llcclccc@{}}
\toprule
Model & Developer & Engine & TP & Mode & temp.\ & top-$p$ & top-$k$ \\
\midrule
\micon{moonshot_mark.png}Kimi-K2.6 & Moonshot AI & SGLang & $8$ & instant & $0.6$ & $0.95$ & -- \\
\micon{qwen_mark.png}Qwen3.5-397B-A17B-FP8 & Alibaba & vLLM & $4$ & instant & $0.7$ & $0.80$ & $20$ \\
 & & & & thinking & $0.6$ & $0.95$ & $20$ \\
\micon{qwen_mark.png}Qwen3.6-35B-A3B & Alibaba & vLLM & $1$ & instant & $0.7$ & $0.80$ & $20$ \\
 & & & & thinking & $1.0$ & $0.95$ & $20$ \\
\micon{glm_mark.png}GLM-5.1-FP8 & Z.ai & SGLang & $8$ & instant & $1.0$ & -- & -- \\
\micon{openai_mark.png}gpt-oss-120b & OpenAI & SGLang & $1$ & low/medium effort & $1.0$ & $1.0$ & $0$ \\
\bottomrule
\end{tabular}
\caption{Self-hosted fixed-weight models and their serving and decoding settings. TP denotes tensor parallelism per replica, and dashes mark parameters left at their vendor defaults. Qwen instant modes use a presence penalty of $1.5$, compared with $0$ in thinking mode. For gpt-oss-120b, we vary the reasoning effort.}
\label{tab:openweight-serving}
\end{table}

\paragraph{Composite budget.}
The profiles $\{0.25,0.5,1,2\}\times$ jointly scale four hard caps relative to the $1\times$ reference in Table~\ref{tab:openweight-budget}: model calls, Lean calls, tokens, and wall-clock time. The mechanism, decoding parameters, and verification procedure remain fixed. Because all four caps change together, the study does not isolate the effect of any one resource.

\begin{table}[t]
\centering\small
\setlength{\tabcolsep}{5pt}
\begin{tabular}{@{}lrrrr@{}}
\toprule
Budget & Model calls & Lean calls & Tokens & Wall (s) \\
\midrule
$0.25\times$ & $3$ & $15$ & $100$K & $450$ \\
$0.5\times$ & $6$ & $30$ & $200$K & $900$ \\
$1\times$ & $12$ & $60$ & $400$K & $1{,}800$ \\
$2\times$ & $24$ & $120$ & $800$K & $3{,}600$ \\
\bottomrule
\end{tabular}
\caption{Per-target composite budget profiles. Each row gives four hard caps. ``Tokens'' is the combined input$+$output allowance; the per-call output cap remains $32{,}768$.}
\label{tab:openweight-budget}
\end{table}

\section{Results of Test-Time Budget Scaling}
\label{app:openweight-results}
\label{app:openweight-grid}

\paragraph{Search activity.}
Table~\ref{tab:openweight-grid} reports mean kernel-verified transitions per target for each seed in $\{19,36,65\}$. Figure~\ref{fig:openweight}a plots the target-weighted mean across the three seeds. Among recorded outcomes, seven of the eight configurations increase monotonically with budget in every seed. Low-effort gpt-oss-120b remains near zero. The number of recorded outcomes decreases for some runs in thinking mode at larger budgets. For example, Qwen3.5-397B in thinking mode has $n=1441/1428/1359/1271$ recorded target--seed outcomes at $0.25/0.5/1/2\times$. The reported values are target-weighted over recorded outcomes.

\paragraph{Parser coverage.}
The proof-state S-expression parser deliberately does not support the AST forms \texttt{:mv}, \texttt{:mvd}, \texttt{:subst}, and \texttt{:proj}. When the parser encounters one of these forms, the runner exits before writing a terminal outcome. These exits occur more often in deeper searches, so missing outcome records become more frequent as the budget grows. Means computed only from recorded outcomes may therefore be biased upward at larger budgets. We treat the curves as descriptive search activity, not as unbiased estimates for the full manifest or measures of proposal efficiency.

\begin{table}[t]
\centering\small
\setlength{\tabcolsep}{6pt}
\begin{tabular}{@{}ll cccc@{}}
\toprule
Model & Mode & $0.25\times$ & $0.5\times$ & $1\times$ & $2\times$ \\
\midrule
Qwen3.5-397B & thinking & 1.44/1.15/1.18 & 2.35/1.96/2.08 & 3.80/3.66/3.73 & 6.41/6.15/6.59 \\
Qwen3.5-397B & instant & 0.82/0.78/0.85 & 1.19/1.02/1.23 & 1.64/1.78/1.67 & 2.26/2.37/2.37 \\
Qwen3.6-35B & instant & 0.48/0.51/0.41 & 0.70/0.89/0.75 & 1.35/1.58/1.57 & 2.27/2.67/2.65 \\
Qwen3.6-35B & thinking & 0.50/0.51/0.53 & 0.81/0.82/0.75 & 1.31/1.32/1.31 & 2.21/2.19/2.17 \\
GLM-5.1 & instant & 1.05/0.95/1.06 & 1.72/1.44/1.63 & 2.57/2.63/2.78 & 4.26/4.27/4.29 \\
Kimi-K2.6 & instant & 0.97/0.89/0.98 & 1.38/1.20/1.31 & 1.95/2.12/2.17 & 3.20/2.94/3.11 \\
gpt-oss-120b & med. & 0.56/0.49/0.57 & 0.90/0.90/0.89 & 1.42/1.47/1.50 & 2.19/2.28/2.39 \\
gpt-oss-120b & low & 0.01/0.02/0.01 & 0.01/0.01/0.02 & 0.03/0.01/0.02 & 0.02/0.02/0.02 \\
\bottomrule
\end{tabular}
\caption{Mean kernel-verified transitions per target, with entries ordered as \textbf{seed 19 / seed 36 / seed 65}. Figure~\ref{fig:openweight}a plots the target-weighted mean across the three seeds. These counts measure cumulative verified search activity, not unique proofs or solve rates.}
\label{tab:openweight-grid}
\end{table}

\paragraph{Solves.}
Table~\ref{tab:openweight-solves} reports both solve events and distinct target coverage. Between $0.25\times$ and $2\times$, the number of solve events rises from $28$ to $44$, while the union of solved targets rises from $2$ to $10$. Kimi-K2.6 and the two Qwen3.5 modes account for most of the endpoint increase. Four targets are solved at $2\times$ but not at a lower budget: \texttt{brualdi\_ch10\_31}, \texttt{brualdi\_ch1\_10}, \texttt{hackmath\_4}, and \texttt{putnam\_1977\_a5}. The budget arms are separate stochastic runs, and the number of recorded outcomes varies with budget. These counts describe observed coverage; they do not establish monotone per-model scaling or isolate the effect of any one resource.

\begin{table}[t]
\centering\small
\setlength{\tabcolsep}{4.5pt}
\begin{tabular}{@{}llrrrr@{}}
\toprule
Model & Mode & $0.25\times$ & $0.5\times$ & $1\times$ & $2\times$ \\
\midrule
Qwen3.5-397B & thinking & $3$ & $4$ & $5$ & $9$ \\
Qwen3.5-397B & instant & $3$ & $3$ & $3$ & $6$ \\
Qwen3.6-35B  & instant & $3$ & $3$ & $4$ & $3$ \\
Qwen3.6-35B  & thinking & $4$ & $3$ & $3$ & $4$ \\
GLM-5.1      & instant & $3$ & $4$ & $5$ & $3$ \\
Kimi-K2.6    & instant & $3$ & $7$ & $9$ & $11$ \\
gpt-oss-120b & medium  & $6$ & $5$ & $6$ & $6$ \\
gpt-oss-120b & low     & $3$ & $3$ & $0$ & $2$ \\
\midrule
\multicolumn{2}{@{}l}{All eight configurations: solve events} & $28$ & $32$ & $35$ & $44$ \\
\multicolumn{2}{@{}l}{Union of solved targets} & $2$ & $6$ & $6$ & $10$ \\
\bottomrule
\end{tabular}
\caption{Final kernel-verified outcomes across seeds $\{19,36,65\}$. A solve event is one run for a particular model, mode, seed, and target that returns a valid proof. Repeated solutions of the same target count separately. The last row counts target identifiers solved at least once. Budget columns correspond to separate runs rather than cumulative prefixes.}
\label{tab:openweight-solves}
\end{table}

\paragraph{Solved-target inventory.}
Across all budgets and seeds, the open-weight runs solve $11$ distinct targets: $4$ from PutnamBench and $7$ from CombiBench. None is from IMO-LeanProofBench. Table~\ref{tab:openweight-inventory} gives the model, budget, and search statistics for one representative run per target. All five models solve \texttt{brualdi\_ch14\_33}. The listed solves for \texttt{putnam\_2012\_a2} and \texttt{putnam\_1977\_a5} occur only at the larger budgets.

\begin{table}[t]
\centering\footnotesize
\setlength{\tabcolsep}{5pt}
\begin{tabular}{@{}lll rrr p{3.6cm}@{}}
\toprule
Target & Bench & Model (mode, budget) & Trans. & Calls & Lean & Representative mechanism \\
\midrule
\texttt{putnam\_2012\_a2} & Putnam & Qwen3.5-397B (thinking, $2\times$) & $7$ & $17$ & $40$ & decomposition and derived identity lemma \\
\texttt{putnam\_1977\_a5} & Putnam & Qwen3.5-397B (thinking, $2\times$) & $7$ & $23$ & $59$ & ProofLib schema reuse \\
\texttt{putnam\_2001\_a1} & Putnam & Qwen3.5-397B (thinking, $2\times$) & $3$ & $5$  & $21$ & hypothesis instantiation and rewrite \\
\texttt{putnam\_1988\_b1} & Putnam & Qwen3.5-397B (thinking, $2\times$) & $2$ & $5$  & $10$ & explicit construction and \texttt{ring} \\
\texttt{hackmath\_4}      & Combi  & Kimi-K2.6 (instant, $2\times$) & $1$ & $3$  & $8$  & \texttt{IsLeast} decomposition and pigeonhole \\
\texttt{brualdi\_ch1\_10} & Combi  & Kimi-K2.6 (instant, $2\times$) & $4$ & $14$ & $33$ & order-$2$ impossibility via \texttt{omega} \\
\texttt{brualdi\_ch7\_7}  & Combi  & Kimi-K2.6 (instant, $2\times$) & $3$ & $3$  & $28$ & \texttt{Int.gcd\_fib} rewrite \\
\texttt{brualdi\_ch8\_6}  & Combi  & Kimi-K2.6 (instant, $2\times$) & $1$ & $3$  & $11$ & induction on the summation \\
\texttt{brualdi\_ch14\_33}& Combi  & Kimi-K2.6 (instant, $2\times$) & $1$ & $1$  & $10$ & single-lemma rewrite (\texttt{cycleType\_inv}) \\
\texttt{brualdi\_ch2\_11} & Combi  & GLM-5.1 (instant, $1\times$)   & $1$ & $2$  & $7$  & kernel \texttt{decide} \\
\texttt{brualdi\_ch10\_31}& Combi  & Qwen3.6-35B (thinking, $2\times$) & $2$ & $22$ & $30$ & witness and kernel \texttt{decide} \\
\bottomrule
\end{tabular}
\caption{The $11$ distinct targets solved in the open-weight runs. For one representative run per target, ``Trans.''\ gives kernel-verified transitions, ``Calls''\ gives model calls, and ``Lean''\ gives kernel calls. Every listed run reaches $\rho=1$. Medium-effort gpt-oss-120b also solves \texttt{brualdi\_ch14\_33} and \texttt{brualdi\_ch7\_7}. Appendix~\ref{app:openweight-cases} gives the corresponding proofs.}
\label{tab:openweight-inventory}
\end{table}

\paragraph{Re-verification.}
We re-verify every solve in the frozen Lean~4.29.1 environment with Mathlib commit \texttt{5e932f97}. Each returned proof term is elaborated from scratch, and \texttt{\#print axioms} enumerates its axiom dependencies. All $139$ kernel-verified solve events have a stored closing proof, with no reported $\rho=1$ lacking one. Every proof depends only on $\{\texttt{propext},\ \texttt{Classical.choice},\ \texttt{Quot.sound}\}$, and none uses \texttt{native\_decide}. Re-verification found $0$ false positives.

\paragraph{Verified substrate.}
In addition to Mathlib, the ProofEvolve environment $\Env$ contains the library of $158$ kernel-verified lemmas. The library has no occurrences of \texttt{sorry} and introduces no axioms beyond the standard three. A proof that uses one of these lemmas therefore has the same axiom footprint as a proof built directly on Mathlib. During re-verification, we inline the library so that \texttt{\#print axioms} checks its dependencies transitively. The proof of \texttt{putnam\_1977\_a5} in Appendix~\ref{app:openweight-cases} uses one lemma from this library.

\paragraph{Comparison with agentic baselines.}
We also evaluate pass@$16$, escalating whole-proof search, ReAct, LEAP, Hilbert, and ProofEvolve on a fixed random subset shared across models. All methods solve few targets with these open-weight models, so we draw no quantitative cross-method conclusion from this subset. Verified closure and kernel-verified transition counts are specific to ProofEvolve's DAG and do not support a comparison with the other methods. Table~\ref{tab:main}, which uses Claude Opus~4.8 on the full benchmarks, provides the matched solve-rate comparison.

\section{Kernel-Certified Open-Weight Proofs on PutnamBench and CombiBench}
\label{app:openweight-cases}
This appendix gives the kernel-certified proofs for all $11$ distinct open-weight solves in Table~\ref{tab:openweight-inventory}. We verify them as described in Appendix~\ref{app:openweight-results}. Each theorem name is its benchmark target identifier. The caption records the solving model and the kernel-verified transitions and model calls for one solving run.

\subsection{PutnamBench}

\begin{LeanBox}[label={lst:putnam2012a2}]{putnam\_2012\_a2: Qwen3.5-397B (thinking, $2\times$), $7$ transitions / $17$ calls}
import Mathlib
import ProofLib

open Matrix

theorem putnam_2012_a2
(S : Type*) [CommSemigroup S]
(a b c : S)
(hS : ∀ x y : S, ∃ z : S, x * z = y)
(habc : a * c = b * c)
: a = b := by
  obtain ⟨z, hz⟩ := hS c a
  obtain ⟨e, he⟩ := hS c c
  have h_id : ∀ x : S, x * e = x := by
    intro x
    obtain ⟨w, hw⟩ := hS c x
    calc
      x * e = (c * w) * e := by rw [hw]
      _ = c * (w * e) := by rw [mul_assoc]
      _ = c * (e * w) := by rw [mul_comm w e]
      _ = (c * e) * w := by rw [mul_assoc]
      _ = c * w := by rw [he]
      _ = x := by rw [hw]
  obtain ⟨d, hd⟩ := hS c e
  exact calc
    a = a * e := by rw [h_id a]
    _ = a * (c * d) := by rw [hd]
    _ = (a * c) * d := by rw [mul_assoc]
    _ = (b * c) * d := by rw [habc]
    _ = b * (c * d) := by rw [mul_assoc]
    _ = b * e := by rw [hd]
    _ = b := by rw [h_id b]
\end{LeanBox}
right cancellation in a commutative semigroup satisfying the stated divisibility condition. The proof first establishes $\forall x,\ x*e=x$ and then uses this identity in the cancellation argument.

\begin{LeanBox}[label={lst:putnam1977a5}]{putnam\_1977\_a5: Qwen3.5-397B (thinking, $2\times$), $7$ transitions / $23$ calls}
import Mathlib
import ProofLib

open RingHom Set Nat

theorem putnam_1977_a5
(p m n : ℕ)
(hp : Nat.Prime p)
(hmgen : m ≥ n)
: (choose (p * m) (p * n) ≡ choose m n [MOD p]) := by
  rw [Nat.ModEq]
  haveI : Fact (Nat.Prime p) := ⟨hp⟩
  rw [Choose.choose_modEq_choose_mod_mul_choose_div_nat]
  simp [Nat.mul_mod, Nat.mul_div_cancel_left, Nat.choose_zero_right, Nat.mod_eq_of_lt]
  simp [Nat.mul_div_cancel_left, Nat.Prime.pos hp]
\end{LeanBox}
the Lucas-type congruence $\binom{pm}{pn}\equiv\binom{m}{n}\pmod p$. The proof applies a $p$-adic binomial congruence lemma from ProofLib and finishes with \texttt{simp}.

\begin{LeanBox}[label={lst:putnam1988b1}]{putnam\_1988\_b1: Qwen3.5-397B (thinking, $2\times$), $2$ transitions / $5$ calls}
import Mathlib
import ProofLib

open Set Filter Topology

theorem putnam_1988_b1
: ∀ a ≥ 2, ∀ b ≥ 2, ∃ x y z : ℤ, x > 0 ∧ y > 0 ∧ z > 0 ∧ a * b = x * y + x * z + y * z + 1 := by
  intro a ha b hb; use 1, a - 1, b - 1; constructor; norm_num; constructor; linarith; constructor; linarith; ring
\end{LeanBox}
every product $ab$ with $a,b\ge 2$ equals $xy+xz+yz+1$ for positive integers $x,y,z$. The proof sets $x=1$, $y=a-1$, and $z=b-1$, then finishes with \texttt{linarith} and \texttt{ring}.

\begin{LeanBox}[label={lst:putnam2001a1}]{putnam\_2001\_a1: Qwen3.5-397B (thinking, $2\times$), $3$ transitions / $5$ calls}
import Mathlib
import ProofLib

open Topology Filter

theorem putnam_2001_a1
(S : Type*)
[Mul S]
(hS : ∀ a b : S, (a * b) * a = b)
: ∀ a b : S, a * (b * a) = b := by
  intro a b
  have h := hS b a; have h2 := hS (b * a) b; rw [h] at h2; exact h2
\end{LeanBox}
if a magma satisfies $(a*b)*a=b$ for all $a,b$, then $a*(b*a)=b$. The proof instantiates the hypothesis twice and rewrites once.

\subsection{CombiBench}

\begin{LeanBox}[label={lst:hackmath4}]{hackmath\_4: Kimi-K2.6 (instant, $2\times$), $1$ transitions / $3$ calls}
import Mathlib

theorem hackmath_4 : IsLeast {n | ∀ f : Fin n → Fin 12, ∃ a b, a ≠ b ∧ f a = f b} ((13) : ℕ ) := by
  constructor
  · -- Show 13 is in the set
    intro f
    have h1 : Fintype.card (Fin 13) = 13 := by simp
    have h2 : Fintype.card (Fin 12) = 12 := by simp
    have h3 : Fintype.card (Fin 13) > Fintype.card (Fin 12) := by rw [h1, h2]; norm_num
    obtain ⟨a, b, hab, heq⟩ := Fintype.exists_ne_map_eq_of_card_lt f h3
    exact ⟨a, b, hab, heq⟩
  · -- Show 13 is the least element in the set
    intro n hn
    by_contra h
    push_neg at h
    have h4 : n ≤ 12 := by omega
    have h5 : ∃ f : Fin n ↪ Fin 12, True := by
      have h6 : Fintype.card (Fin n) ≤ Fintype.card (Fin 12) := by
        simp
        omega
      have h7 : Nonempty (Fin n ↪ Fin 12) := Function.Embedding.nonempty_of_card_le h6
      obtain ⟨f⟩ := h7
      exact ⟨f, trivial⟩
    obtain ⟨f, _⟩ := h5
    have h7 : ∀ a b, a ≠ b → f a ≠ f b := by
      intro a b hab
      exact f.injective.ne hab
    have h8 := hn f
    obtain ⟨a, b, hab, heq⟩ := h8
    have h9 : f a ≠ f b := h7 a b hab
    contradiction
\end{LeanBox}
$13$ is the least group size that forces two people into the same month. The proof establishes membership with \texttt{Fintype.exists\_ne\_map\_eq\_of\_card\_lt} and minimality by constructing an embedding for $n\le 12$.

\begin{LeanBox}[label={lst:brualdich110}]{brualdi\_ch1\_10: Kimi-K2.6 (instant, $2\times$), $4$ transitions / $14$ calls}
import Mathlib

structure IsMagicSquare {n : ℕ} (M : Matrix (Fin n) (Fin n) ℕ) : Prop where
  mem : ∀ i j, M i j ∈ Finset.Icc 1 (n * n)
  pairwise : ∀ i j i' j', i ≠ i' ∨ j ≠ j' → M i j ≠ M i' j'
  same_sum : ∃ s, (∀ i, ∑ j, M i j = s) ∧ (∀ j, ∑ i, M i j = s) ∧ (∑ i, M i i.rev = s) ∧ ∑ i, M i i = s

theorem brualdi_ch1_10 : ¬∃ (M : Matrix (Fin 2) (Fin 2) ℕ), IsMagicSquare M := by
  intro h
  rcases h with ⟨M, hM⟩
  rcases hM with ⟨h_rows, h_cols, h_diag, h_anti, h_pairwise⟩
  have h1 := h_rows 0 0
  have h2 := h_rows 0 1
  have h3 := h_rows 1 0
  have h4 := h_rows 1 1
  have h5 := h_cols 0 0 0 1 (Or.inr (by decide))
  have h6 := h_cols 0 0 1 0 (Or.inl (by decide))
  have h7 := h_cols 0 0 1 1 (Or.inl (by decide))
  have h8 := h_cols 0 1 1 0 (Or.inl (by decide))
  have h9 := h_cols 0 1 1 1 (Or.inl (by decide))
  have h10 := h_cols 1 0 1 1 (Or.inr (by decide))
  simp [Finset.mem_Icc] at h1 h2 h3 h4
  have h_row0 := h_anti 0
  have h_row1 := h_anti 1
  have h_col0 := h_pairwise.1 0
  have h_col1 := h_pairwise.1 1
  have h_diag1 := h_pairwise.2.2
  have h_anti_diag := h_pairwise.2.1
  simp [Fin.sum_univ_two, Fin.rev] at h_row0 h_row1 h_col0 h_col1 h_diag1 h_anti_diag
  omega
\end{LeanBox}
no order-$2$ magic square exists. The proof states the range, distinctness, and row, column, and diagonal sum constraints, then solves the resulting integer system with \texttt{omega}.

\begin{LeanBox}[label={lst:brualdich77}]{brualdi\_ch7\_7: Kimi-K2.6 (instant, $2\times$), $3$ transitions / $3$ calls}
import Mathlib

theorem brualdi_ch7_7 (m n d : ℕ+) (hmd : d = Nat.gcd m n) :
    Nat.gcd (Nat.fib m) (Nat.fib n) = Nat.fib d := by
  have h1 : (Nat.fib ↑m).gcd (Nat.fib ↑n) = Nat.fib (Int.gcd (↑m : ℤ) (↑n : ℤ)) := by rw [← Int.gcd_fib (↑m : ℤ) (↑n : ℤ)]; simp
  rw [hmd]
  simp [hmd] at h1 ⊢; exact h1
\end{LeanBox}
$\gcd(F_m,F_n)=F_{\gcd(m,n)}$ for Fibonacci numbers. The proof converts between the $\mathbb{N}$ and $\mathbb{Z}$ formulations and applies the library identity.

\begin{LeanBox}[label={lst:brualdich86}]{brualdi\_ch8\_6: Kimi-K2.6 (instant, $2\times$), $1$ transitions / $3$ calls}
import Mathlib

theorem brualdi_ch8_6 (n : ℕ) (h : ℕ → ℝ) (h' : ∀ i, h i = 2 * i ^ 2 - i + 3) :
    ∑ i ∈ Finset.range (n + 1), h i = ((fun n => ((n + 1) * (4 * n ^ 2 - n + 18) / 6)) : ℕ → ℝ ) n := by
  induction n with
  | zero =>
    simp [Finset.sum_range_succ, h']
    all_goals norm_num
  | succ n ih =>
    rw [Finset.sum_range_succ, ih]
    simp [h']
    ring_nf
    <;> field_simp
    <;> ring_nf
    <;> norm_num
    <;> ring
\end{LeanBox}
the closed form $\sum_{k=0}^{n}(2k^2-k+3)=\tfrac{(n+1)(4n^2-n+18)}{6}$, proved by induction.

\begin{LeanBox}[label={lst:brualdich1433}]{brualdi\_ch14\_33: Kimi-K2.6 (instant, $2\times$), $1$ transitions / $1$ calls}
import Mathlib

theorem brualdi_ch14_33 {α : Type*} [Fintype α] [DecidableEq α] (σ : Equiv.Perm α) :
    σ.cycleType = σ⁻¹.cycleType := by
  rw [Equiv.Perm.cycleType_inv]
\end{LeanBox}
a permutation and its inverse have the same cycle type. The proof applies the library identity \texttt{Equiv.Perm.\allowbreak cycleType\_inv}; all five models solve this target.

\begin{LeanBox}[label={lst:brualdich211}]{brualdi\_ch2\_11: GLM-5.1 (instant, $1\times$), $1$ transitions / $2$ calls}
import Mathlib

open Finset

theorem brualdi_ch2_11 :
    ((Icc (1 : ℕ) 20).powersetCard 3 |>.filter (fun S => ∀ a ∈ S, a - 1 ∉ S ∧ a + 1 ∉ S)).card =
    ((816) : ℕ ) := by
  set_option maxRecDepth 1000000 in decide
\end{LeanBox}
there are $816$ size-$3$ subsets of $\{1,\dots,20\}$ with no two consecutive elements. The kernel tactic \texttt{decide} verifies the count without \texttt{native\_decide}.

\begin{LeanBox}[label={lst:brualdich1031}]{brualdi\_ch10\_31: Qwen3.6-35B (thinking, $2\times$), $2$ transitions / $22$ calls}
import Mathlib

def isDifferenceSet (n : ℕ) (B : Finset (ZMod n)) : Prop :=
  ∃ k, ∀ x : (ZMod n),  x ≠ 0 → ∑ i ∈ B, ∑ j ∈ B \ {i}, List.count x [i - j] = k

theorem brualdi_ch10_31 : isDifferenceSet 21 {0, 3, 4, 9, 11} := by
  dsimp only [isDifferenceSet]
  use 1; decide
\end{LeanBox}
$\{0,3,4,9,11\}$ is a difference set in $\mathbb{Z}_{21}$. The proof supplies the multiplicity witness and uses the kernel tactic \texttt{decide}, not \texttt{native\_decide}.

\clearpage
\section{Detailed Setup for the Lean Workbook Study}
\label{app:heldout}

This appendix records the corpus, library construction, screening, retrieval and
evaluation used in Section~\ref{sec:transfer}.

\subsection{Corpus and verification}

We use theorem statements from Lean~Workbook \citep{ying2024leanworkbook} and the
machine-generated proofs released by Goedel-Prover \citep{lin2025goedelproverv1}. We
re-elaborate every candidate in a fixed Lean~4 environment \citep{demoura2021lean4}
with a matched Mathlib commit \citep{mathlib2020}, under the same acceptance criteria
used throughout the paper: an accepted proof has no unresolved metavariables or
placeholders, passes the Lean kernel at the stated type, and a restricted
\texttt{\#print axioms} check must show dependencies only on \texttt{propext},
\texttt{Classical.choice} and \texttt{Quot.sound}. We reject proofs that use
\texttt{native\_decide} because its code generation path introduces an axiom outside
the standard kernel. Each candidate runs in a fresh process with a $300$-second
timeout. This verification retains $20{,}554$ statement and proof pairs.

\subsection{Deduplication and the evaluation split}

Lean~Workbook contains many restatements of the same theorem, so we deduplicate
before splitting. We generate candidate pairs with MinHash LSH over statement
shingles using $128$ permutations, $32$ bands of $4$ rows, seed $20260806$ and an
approximate threshold of $0.42$. This produces $2{,}362{,}946$ candidate pairs.
We confirm each pair with direct similarity tests or a guarded signature match.
The direct tests use alpha equivalence of the elaborated statement, statement
$n$-gram Jaccard above $0.8$ and docstring $n$-gram Jaccard above $0.8$. The
guarded match requires signature Jaccard above $0.95$ over at least five symbols and
statement Jaccard above $0.6$. We disable the dense embedding channel for
deduplication because transitive embedding matches can join distinct theorems into
one cluster. The exact tests retain $26{,}330$ edges and $5{,}216$ multi-theorem
clusters. The largest cluster contains $106$ theorems. Removing $6{,}935$
alpha-exact duplicates and $2{,}651$ near duplicates leaves $10{,}968$
representatives, a $46.6\%$ reduction. A fixed seed assigns $1{,}000$ theorems to
evaluation and $9{,}968$ to the source stream. A post-split audit checks the boundary
again for residual near duplicates.

\subsection{Leakage screening}

The exact tests can still miss the same theorem written in a different form.
Five independent language-model judges therefore screen each evaluation theorem
against its $64$ nearest library neighbors. A judge flags a residual near duplicate
when a neighbor states the same theorem up to renaming or a change of constants.
We record every verdict and its reason, then exclude a theorem when at least two
judges flag it. The screen removes $256$ theorems, or $25.6\%$, and leaves the
$744$ used throughout the study. Screening occurs before condition outcomes are
compared, and every condition uses the same evaluation set.

The panel is five Claude models spanning four releases
(\texttt{claude-opus-5}, \texttt{claude-opus-4.8}, \texttt{claude-opus-4.7},
\texttt{claude-opus-4.6} and \texttt{claude-sonnet-5}), each queried once per
evaluation theorem at medium reasoning effort. Verdicts are independent across models
but not across model families. Flag counts over the $1{,}000$ candidates are $624$,
$120$, $57$, $37$, $56$ and $106$ for zero through five flags. Agreement is therefore
bimodal: $730$ theorems, or $73.0\%$, receive a unanimous verdict, and only $94$ fall
in the two-to-three-flag band where the threshold is decisive. Raising the threshold
to three flags would retain $801$ theorems instead of $744$. Prompt~\ref{prompt:leak}
gives the template; each judge sees only the evaluation theorem and its retrieved
neighbors.

\begin{PromptBox}[label={prompt:leak}]{Leakage judge, one call per evaluation theorem}
You are a LEAKAGE auditor for a held-out Lean theorem-proving benchmark. Read
ONLY the record for this evaluation theorem (a single JSON object with
holdout_statement, holdout_docstring, and a "retrieved" array -- each item has
statement + proof_body + docstring). Do NOT read any other file.

Rule LEAK if ANY retrieved item is a NEAR-DUPLICATE of the holdout (the same
problem up to variable renaming / constant changes / trivial rewrite) OR
directly contains the holdout's answer or a proof that would transfer by
copying (memorization, not reasoning).

Rule CLEAN if the retrieved items are only related-but-distinct (same technique
/ topic, genuinely different problem).

Give the verdict + a one-sentence reason.
\end{PromptBox}

\subsection{Verified theorem schema library}

The base model attempts every buildable theorem in the $9{,}968$-theorem source
stream once under the same one-shot budget used for evaluation. Target files
cannot be built for $384$ theorems. Of the remaining $9{,}584$, the kernel and
axiom audit accept $5{,}546$ proofs. The self-solve rate is $57.9\%$ over scoreable
theorems and $55.6\%$ over the full source stream. These $5{,}546$ theorem and
proof pairs form the persistent verified theorem schema library. Every entry
contains a proof written by the base model and checked by Lean. No human proof or
released Goedel-Prover proof enters the library. Unlike ProofLib, whose lemmas are
installed in $\Env$ and therefore enter the axiom footprint of any proof that uses
them (Appendix~\ref{app:openweight-results}), this library is a retrieval corpus
only: its entries never enter the Lean environment and never appear in any proof's
axiom footprint. The library-size conditions use nested subsets drawn once with a
fixed seed, so the $1{,}000$-proof library is contained in the $2{,}000$-proof
library, which is contained in the $4{,}000$-proof library.

\subsection{Semantic retrieval}

The semantic retriever embeds theorem statements with
\texttt{bge-large-en-v1.5}. Raw cosine similarities occupy a narrow band because
the corpus contains many competition-style algebra and inequality theorems. We
subtract the library mean from each vector before computing cosine similarity.
The encoder uses asymmetric instructions. Evaluation queries carry the prefix
\texttt{Represent this sentence for searching relevant passages:}, while library
entries are embedded without a prefix. Cosine scores are rounded to two decimal
places, with lexical statement Jaccard breaking ties. The rank excludes the symbol
and type signature channel because it saturates at $1.0$ on this corpus.
Retrieval is deterministic, so each run receives the same $K$ library entries for
a given theorem.

\subsection{Base model, serving and decoding}

The base model is Qwen3.5-397B-A17B in FP8 \citep{qwen2026qwen35fp8card}, the same
open-weight model used in the test-time scaling study of
Appendix~\ref{app:openweight}, but served differently here. We serve it with SGLang
rather than vLLM, at tensor-parallel size $8$ rather than $4$. Each replica uses
eight B200 GPUs, a context window of $65{,}536$ tokens and GPU memory utilization of
$0.92$. A client distributes requests across $16$ replicas and retries transport
errors on the next live replica. The model runs in its default thinking mode without
a chat-template argument. Library construction and all evaluation conditions use one
decoding profile pinned by hash: temperature $0.6$, top-$p$ $0.95$, top-$k$ $20$,
min-$p$ $0.0$, presence penalty $0.0$, repetition penalty $1.0$, and a client-side
limit of $32{,}768$ output tokens. We do not pin a decode seed. The three runs are
independent samples from the decoder, and the reported standard deviation measures
variation across these runs.

\subsection{Prompt}

All three conditions use one template containing a system prefix, the target, base
premises, worked examples and previous attempts. Only the worked-examples section
changes. Zero-shot renders this section as \texttt{None available.} rather than
omitting it. Base premises and previous attempts are empty in every condition, so
the schema library is the only source of retrieved proof knowledge. Each schema is
rendered as a compilable Lean \texttt{example} with its theorem statement and the
proof body accepted by Lean.

\begin{PromptBox}{ProofEvolve prompt with relevant schemas}
[system]
You are proving a theorem in Lean 4 with Mathlib.

Reply with a single fenced ```lean block containing ONLY the tactic block
that completes the given theorem. Do not restate the theorem, do not add
imports, and do not declare anything at the top level: the surrounding file
is fixed and your reply is inserted after `:= by`.

If you cannot close the goal, still reply with your best tactic block.
Never emit `sorry`.

[user]
## Target
The file below is frozen. Your tactic block is inserted where marked.
```lean
import Mathlib

theorem <name> <statement> := by
  <your tactic block here>
```
## Proof state
One open obligation: the theorem statement above.
## Base premises
None selected for this request.
## Worked examples
Solved problems from your own library, for reference. They are different
problems; adapt the techniques, do not copy.
### Example 1
```lean
example <library statement 1> := by
<verified proof body 1>
```
... (K examples)
## Previous attempts
None.
\end{PromptBox}

\subsection{Conditions, budget and compute}

Every condition gives one attempt per theorem with no self-repair or second sample.
The Lean environment imports Mathlib. Relevant retrieval uses
$K\in\{8,16,32,64\}$ over the full library. We also evaluate library sizes
$\{1{,}000,\,2{,}000,\,4{,}000,\,\text{full}\}$ at $K{=}8$. Random retrieval uses
$K{=}8$ with a fixed draw seed for each run. We run all nine configurations three
times over the same $1{,}000$ evaluation candidates, then score the common
$744$-theorem screened subset in every condition. Every run produces one record
per candidate. Condition contrasts are computed as paired differences across matched
runs.

For the copy rate in Section~\ref{sec:transfer}, we remove comments and normalize
whitespace in every accepted proof body. We then compare it with the eight proof
bodies shown for that theorem. For each run index, we count cases where relevant
retrieval closes a theorem and the corresponding zero-shot run leaves it open.

The fleet contains $16$ FP8 replicas of $8$ B200 GPUs each, or $128$ B200 GPUs in
total. Building the library over $9{,}968$ source theorems takes about five hours
of fleet time. A three-run sweep of one configuration takes about seven hours. The
complete study uses a few thousand B200-GPU-hours across library construction,
the depth and size sweeps and leakage screening.

\begin{table}[t]
\centering
\setlength{\tabcolsep}{5pt}
\begin{tabular}{@{}lcc@{}}
\toprule
Condition & Solve rate (\%) & Lift (pp) \\
\midrule
Zero-shot & $49.5 \pm 0.7$ & $0.0$ \\
Random retrieval ($K{=}8$) & $49.6 \pm 1.0$ & $+0.1$ \\
\midrule
\multicolumn{3}{c}{\itshape Relevant retrieval, depth (full library)} \\
$K{=}8$  & $53.4 \pm 0.7$ & $+3.9$ \\
$K{=}16$ & $54.1 \pm 0.9$ & $+4.6$ \\
$K{=}32$ & $52.9 \pm 0.9$ & $+3.4$ \\
$K{=}64$ & $54.8 \pm 0.5$ & $+5.3$ \\
\midrule
\multicolumn{3}{c}{\itshape Relevant retrieval, library size ($K{=}8$)} \\
$1{,}000$ proofs & $52.8 \pm 0.3$ & $+3.3$ \\
$2{,}000$ proofs & $52.2 \pm 0.8$ & $+2.6$ \\
$4{,}000$ proofs & $53.7 \pm 0.3$ & $+4.2$ \\
Full library ($5{,}546$) & $53.4 \pm 0.7$ & $+3.9$ \\
\bottomrule
\end{tabular}
\caption{Solve rate on the $744$ screened evaluation theorems. We report the mean
over three independent runs, and the $\pm$ figure is the run-to-run standard
deviation \emph{of the solve rate}. Lift is the paired difference from zero-shot,
computed per run before rounding, so it need not equal the difference of the two
rounded means.}
\label{tab:heldout}
\end{table}

\section{Retrieval-to-Proof Traces on Lean Workbook}
\label{app:examples}

Section~\ref{sec:transfer} reports that $322$ of the $351$ theorems closed by
relevant retrieval and left open by zero-shot, or $91.7\%$, do not reproduce any
shown proof verbatim. That is an aggregate. This appendix gives the individual form
of it: two complete traces from the evaluation theorem, through what the retriever
actually returned, to the proof Lean accepted. Each trace runs through five stages.
\textbf{(1)}~the evaluation theorem and why it is not immediate;
\textbf{(2)}~two of the eight schemas the retriever placed in the proposal context;
\textbf{(3)}~the proof the model produced and the kernel accepted;
\textbf{(4)}~a comparison with the reference proof released with Lean~Workbook, which
shows the two take different routes; and
\textbf{(5)}~the outcome of all three conditions over the three runs.
The selected Lean statements preserve the informal problem and avoid truncated
natural-number arithmetic, inconsistent assumptions and trivial goals.

\begin{ExampleBox}{A bound on $[0,1]$}
\textbf{(1)~Target.} For $0 \le x \le 1$, show that
$|x(x-1)(x^6+2x^4+3x^2+4)| < 5/2$. The two natural factor bounds do not prove the
strict inequality: multiplying $x(1-x) \le 1/4$ and
$x^6+2x^4+3x^2+4 \le 10$ gives only $\le 5/2$.

\smallskip
\begin{lstlisting}[style=lean]
(x : ℝ) (hx : 0 ≤ x ∧ x ≤ 1) :
  |x * (x - 1) * (x^6 + 2 * x^4 + 3 * x^2 + 4)| < 5 / 2
\end{lstlisting}

\textbf{(2)~Retrieved.} Two of the eight schemas placed in the proposal context.
Both are bounds on the same interval, and neither states the target.
\begin{lstlisting}[style=lean]
example (x : ℝ) (hx : 0 ≤ x ∧ x ≤ 1) :
    0 ≤ x - x^2 ∧ x - x^2 ≤ 1/4 := by
  cases hx
  constructor <;> nlinarith [sq_nonneg (x - 1/2)]

example (x : ℝ) (hx : 0 ≤ x ∧ x ≤ 1) :
    x * (x ^ 3 - 6 * x + 9) ≤ 4 := by
  nlinarith [sq_nonneg (x - 1), sq_nonneg (x^2 - 1), sq_nonneg (x^2 - x),
    mul_nonneg hx.1 (sq_nonneg (x - 1)),
    mul_nonneg (sub_nonneg.mpr hx.2) (sq_nonneg (x - 1))]
\end{lstlisting}

\textbf{(3)~Generated.} The kernel-accepted proof. It determines the sign of the
product first, which neither retrieved schema does, and then reuses the
$\mathrm{sq\_nonneg}(x-1/2)$ hint that both of them turn on.
\begin{lstlisting}[style=lean]
cases hx with
| intro hx1 hx2 =>
  have h1 : x * (x - 1) ≤ 0 := by nlinarith
  have h2 : x^6 + 2 * x^4 + 3 * x^2 + 4 ≥ 4 := by
    nlinarith [pow_nonneg hx1 2, pow_nonneg hx1 4, pow_nonneg hx1 6]
  have h3 : x * (x - 1) * (x^6 + 2*x^4 + 3*x^2 + 4) ≤ 0 := by
    nlinarith [pow_nonneg hx1 2, pow_nonneg hx1 4, pow_nonneg hx1 6]
  rw [abs_of_nonpos h3]
  nlinarith [sq_nonneg (x - 1/2), pow_nonneg hx1 2, pow_nonneg hx1 3,
    pow_nonneg hx1 4, pow_nonneg hx1 5, pow_nonneg hx1 6,
    mul_nonneg hx1 (sq_nonneg (x - 1/2)),
    mul_nonneg (sub_nonneg.mpr hx2) (sq_nonneg (x - 1/2))]
\end{lstlisting}

\textbf{(4)~Compared.} The reference proof released with Lean~Workbook splits the
absolute value into two inequalities without determining the sign of the product.
None of its auxiliary inequalities appears in the accepted proof.
\begin{lstlisting}[style=lean]
rw [abs_lt]
constructor <;>
  nlinarith [pow_nonneg (sub_nonneg.mpr hx.1) 0, ...,
             pow_nonneg (sub_nonneg.mpr hx.1) 9]
\end{lstlisting}

\textbf{(5)~Outcome over three runs.} Zero-shot closes zero runs, relevant retrieval
closes two and random retrieval closes one.
\end{ExampleBox}

\enlargethispage{\baselineskip}
\ExampleNeedspace
\begin{ExampleBox}{A three-part symmetric conclusion}
\textbf{(1)~Target.} If $x,y,z>0$ and $x^4+y^4+z^4=3$, then $x+y+z \le 3$,
$xy+yz+zx \le 3$ and $xyz \le 1$. The conclusion is a conjunction, so a single
inequality tactic has to discharge three different bounds at once.

\smallskip
\begin{lstlisting}[style=lean]
(x y z : ℝ) (hx : 0 < x) (hy : 0 < y) (hz : 0 < z)
    (h : x^4 + y^4 + z^4 = 3) :
  x + y + z ≤ 3 ∧ x*y + y*z + z*x ≤ 3 ∧ x*y*z ≤ 1
\end{lstlisting}

\textbf{(2)~Retrieved.} Two of the eight schemas. Both are symmetric
three-variable inequalities under a different constraint.
\begin{lstlisting}[style=lean]
example (x y z : ℝ) (hx : 0 ≤ x) (hy : 0 ≤ y) (hz : 0 ≤ z)
    (h : x^2 + y^2 + z^2 + x*y*z = 4) :
    3*(x^2*y + y^2*z + z^2*x) ≤ (x + y + z)*(x^2 + y^2 + z^2) := by
  nlinarith [sq_nonneg (x - y), sq_nonneg (y - z), sq_nonneg (z - x),
    mul_nonneg hx hy, mul_nonneg hy hz, mul_nonneg hz hx,
    mul_nonneg (sq_nonneg (x - y)) hz, ...]

example (x y z : ℝ) (hx : 0 ≤ x ∧ x ≤ 1) (hy : 0 ≤ y ∧ y ≤ 1)
    (hz : 0 ≤ z ∧ z ≤ 1) :
    2 * (x*y + y*z + z*x) ≤ 3*x*y*z + x + y + z := by
  have h1 : 0 ≤ (1 - x) * (1 - y) * z := by
    apply mul_nonneg
    · apply mul_nonneg <;> linarith
    · linarith
  ...
\end{lstlisting}

\textbf{(3)~Generated.} The kernel-accepted proof builds eleven named intermediate
results and assembles the conjunction at the end, rather than attacking it in one
step. Supporting expressions are elided at \texttt{...} for space.
\begin{lstlisting}[style=lean]
have h1 : x^2 + y^2 + z^2 ≤ 3 := by
  nlinarith [sq_nonneg (x^2 - 1), ..., h]
have h2 : x*y + y*z + z*x ≤ x^2 + y^2 + z^2 := by
  nlinarith [sq_nonneg (x - y), sq_nonneg (y - z), sq_nonneg (z - x)]
have h3 : x*y + y*z + z*x ≤ 3 := by linarith [h1, h2]
have h4 : (x + y + z)^2 ≤ 9 := by nlinarith [h1, h2]
have h5 : x + y + z ≤ 3 := by nlinarith [sq_nonneg (x + y + z - 3), h4]
have h6 : x^2*y^2 + y^2*z^2 + z^2*x^2 ≤ 3 := by
  nlinarith [sq_nonneg (x^2 - y^2), ..., h]
have h7 : (x^2 + y^2 + z^2) * (x^2*y^2 + y^2*z^2 + z^2*x^2)
    ≥ 9 * x^2 * y^2 * z^2 := by
  nlinarith [mul_nonneg (sq_nonneg x) (sq_nonneg (y^2 - z^2)), ...]
have hP : 0 ≤ x^2*y^2 + y^2*z^2 + z^2*x^2 := by
  nlinarith [sq_nonneg (x*y), sq_nonneg (y*z), sq_nonneg (z*x)]
have h_SP : (x^2 + y^2 + z^2) * (x^2*y^2 + y^2*z^2 + z^2*x^2) ≤ 9 := by
  nlinarith [mul_nonneg hP (sub_nonneg.mpr h1), ...]
have h8 : x^2 * y^2 * z^2 ≤ 1 := by nlinarith [h7, h_SP]
have h9 : x * y * z ≤ 1 := by nlinarith [sq_nonneg (x*y*z - 1), h8]
exact ⟨h5, h3, h9⟩
\end{lstlisting}

\textbf{(4)~Compared.} The released reference proof splits the conjunction first and
discharges all three parts with one shared list of auxiliary inequalities. Six of
its nine inequalities appear nowhere in the accepted proof.
\begin{lstlisting}[style=lean]
refine' ⟨_, _, _⟩
all_goals nlinarith [sq_nonneg (x - y), sq_nonneg (y - z), sq_nonneg (z - x),
  sq_nonneg (x + y), sq_nonneg (y + z), sq_nonneg (z + x), h,
  sq_nonneg (x - 1), sq_nonneg (y - 1), sq_nonneg (z - 1)]
\end{lstlisting}

\textbf{(5)~Outcome over three runs.} Zero-shot closes zero runs, relevant retrieval
closes two and random retrieval closes one.
\end{ExampleBox}

\end{document}